\documentclass[a4paper,10pt]{article}
\usepackage{amsmath,amssymb}
\usepackage{appendix}
\usepackage{setspace}
\usepackage{graphicx}
\usepackage{varioref}           
\usepackage{makeidx}            
\usepackage{amsthm}
\usepackage{colortbl}
\usepackage{indentfirst}
\usepackage{varioref}
\usepackage{float}
\usepackage{subfigure}
\usepackage{enumerate}
\usepackage{booktabs}
\usepackage{geometry}
\usepackage{xcolor}
\usepackage{algorithm}          
\usepackage{algorithmic}        
\usepackage{multirow}           
\usepackage[small]{caption}
\usepackage{url}
\usepackage{authblk}            
\usepackage[pagebackref=true,
            breaklinks=true,
            letterpaper=true,
            colorlinks=true,
            citecolor=blue,
            bookmarks=false]{hyperref}

\topmargin      -12.0mm %
\oddsidemargin  2.0mm %
\evensidemargin 2.0mm %
\textheight     230mm   %
\textwidth      160.0mm %





\newtheorem{theorem}{Theorem}[section]
\newtheorem{lemma}[theorem]{Lemma}

\newtheorem{remark}{Remark}[section]



%
%


\title{\textbf{Sparse Kernel Canonical Correlation Analysis via $\ell_1$-regularization}\footnote{Part of the material in this paper was presented in~\cite{Chu13} and~\cite{ZhangThesis13}.}}
\author[1]{Xiaowei~Zhang\thanks{Corresponding author: \href{mailto:zxwtroy87@gmail.com}{zxwtroy87@gmail.com}}}
\author[1]{Delin~Chu}
\author[2]{Li-Zhi~Liao}
\author[2]{Michael~K.~Ng}
\affil[1]{Department of Mathematics, National University of Singapore.}
\affil[2]{Department of Mathematics, Hong Kong Baptist University.}

\date{}

\begin{document}
\maketitle

\begin{abstract}
Canonical correlation analysis (CCA) is a multivariate statistical technique for finding the linear relationship between two sets of variables. The kernel generalization of CCA named kernel CCA has been proposed to find nonlinear relations between datasets. Despite their wide usage, they have one common limitation that is the lack of sparsity in their solution. In this paper, we consider sparse kernel CCA and propose a novel sparse kernel CCA algorithm (SKCCA). Our algorithm is based on a relationship between kernel CCA and least squares. Sparsity of the dual transformations is introduced by penalizing the $\ell_{1}$-norm of dual vectors. Experiments demonstrate that our algorithm not only performs well in computing sparse dual transformations but also can alleviate the over-fitting problem of kernel CCA.
\end{abstract}

\section{Introduction}
The description of relationship between two sets of variables has long been an interesting topic to many researchers. Canonical
correlation analysis (CCA), which was originally introduced in \cite{Hotelling36}, is a multivariate statistical technique for
finding the linear relationship between two sets of variables. Those two sets of variables can be considered as different views
of the same object or views of different objects, and are assumed to contain some joint information in the correlations between
them. CCA seeks a linear transformation for each of the two sets of variables in a way that the projected variables in the
transformed space are maximally correlated.

Let $\{x_{i}\}^{n}_{i=1} \in \mathbf{R}^{d_{1}}$ and $\{y_{i}\}^{n}_{i=1} \in \mathbf{R}^{d_{2}}$ be $n$ samples for variables
$x$ and $y$, respectively. Denote
$$X=
\begin{bmatrix}
      x_{1} & \cdots & x_{n} \\
     \end{bmatrix} \in \mathbf{R}^{d_{1}\times n},  \quad
Y = \begin{bmatrix}
      y_{1} & \cdots & y_{n} \\
     \end{bmatrix} \in \mathbf{R}^{d_{2}\times n}, $$
and assume both $\{x_{i}\}^{n}_{i=1}$ and $\{y_{i}\}^{n}_{i=1}$ have zero mean, i.e., $\sum\limits^{n}_{i=1}x_{i} = 0$ and
$\sum\limits^{n}_{i=1}y_{i} = 0$. Then CCA solves the following optimization problem
\begin{equation}\label{singleCCA}
    \begin{array}{rl}
      \max\limits_{w_{x},w_{y}} & w^{T}_{x}XY^{T}w_{y} \\
      s.t.               & w^{T}_{x}XX^{T}w_{x} = 1, \\
                         & w^{T}_{y}YY^{T}w_{y} = 1,
    \end{array}
\end{equation}
to get  the first pair of \textit{weight vectors} $w_{x}$ and $w_{y}$, which are further utilized to obtain the first pair of
\textit{canonical variables} $w^{T}_{x}X$ and $w^{T}_{y}$Y, respectively. For the rest pairs of weight vectors and canonical
variables, CCA solves sequentially the same problem as \eqref{singleCCA} with additional constraints of orthogonality among
canonical variables. Suppose we have obtained a pair of linear transformations $W_{x} \in \mathbf{R}^{d_{1} \times l}$ and $W_{y}
\in \mathbf{R}^{d_{2} \times l}$, then for a pair of new data $(x, y)$, its projection into the new coordinate system determined
by $(W_{x}, W_{y})$ will be
\begin{equation}\label{CCAproj}
    (W^{T}_{x}x, W^{T}_{y}y).
\end{equation}

Since CCA only consider linear transformation of the original variables, it can not capture nonlinear relations among variables.
However, in a wide range of practical problems linear relations may not be adequate for studying relation among variables.
Detecting nonlinear relations among data is important and useful in modern data analysis, especially when dealing with data that
are not in the form of vectors, such as text documents, images, micro-array data and so on. A natural extension, therefore, is to
explore and exploit nonlinear relations among data. There has been a wide concern in the nonlinear CCA \cite{Dauxois98,Lai01},
among which one most frequently used approach is the kernel generalization of CCA, named kernel canonical correlation analysis
(kernel CCA). Motivated from the development and successful applications of kernel learning methods
\cite{Scholkopf02,Shawe-Taylor04}, such as support vector machines (SVM) \cite{Burges98,Scholkopf02}, kernel principal component
analysis (KPCA) \cite{Scholkopf98}, kernel Fisher discriminant analysis \cite{Mika99}, kernel partial least squares
\cite{Rosipal01} and so on, there has emerged lots of research on kernel CCA
\cite{Akaho01,Melzer01,Bach03,Fukumizu07,Fyfe00,Hardoon04,Hardoon09,Kuss03,Lai01,Shawe-Taylor04}.

Kernel methods have attracted a great deal of attention in the field of nonlinear data analysis. In kernel methods, we first
implicitly represent data as elements in reproducing kernel Hilbert spaces associated with positive definite kernels, then apply
linear algorithms on the data and substitute the linear inner product by kernel functions, which results in nonlinear variants.
The main idea of kernel CCA is that we first virtually map data $X$ into a high dimensional \textit{feature space}
$\mathcal{H}_{x}$ via a mapping $\phi_{x}$ such that data in the feature space become
$$\Phi_{x}=
\begin{bmatrix}
      \phi_{x}(x_{1}) & \cdots & \phi_{x}(x_{n}) \\
     \end{bmatrix} \in \mathbf{R}^{\mathcal{N}_{x} \times n},
$$
where $\mathcal{N}_{x}$ is the dimension of feature space $\mathcal{H}_{x}$ that can be very high or even infinite. The mapping
$\phi_{x}$ from input data to the feature space $\mathcal{H}_{x}$ is performed implicitly by considering a \textit{positive
definite kernel function} $\kappa_{x}$ satisfying
\begin{equation}\label{kernelfunction}
    \kappa_{x}(x_{1},x_{2})=\langle \phi_{x}(x_{1}),\phi_{x}(x_{2}) \rangle,
\end{equation}
where $\langle \cdot,\cdot \rangle$ is an inner product in $\mathcal{H}_{x}$, rather than by giving the coordinates of
$\phi_{x}(x)$ explicitly. The feature space $\mathcal{H}_{x}$ is known as the \textit{Reproducing Kernel Hilbert Space (RKHS)}
\cite{Wahba99} associated with kernel function $\kappa_{x}$. In the same way, we can map $Y$ into a feature space
$\mathcal{H}_{y}$ associated with kernel $\kappa_{y}$ through mapping $\phi_{y}$ such that
$$
\Phi_{y} = \begin{bmatrix}
      \phi_{y}(y_{1}) & \cdots & \phi_{y}(y_{n}) \\
     \end{bmatrix} \in \mathbf{R}^{\mathcal{N}_{y} \times n}.
$$
After mapping $X$ to $\Phi_{x}$ and $Y$ to $\Phi_{y}$, we then apply ordinary linear CCA to data pair $(\Phi_{x}, \Phi_{y})$.

Let
\begin{equation}\label{GramMatrix}
    K_{x} = \langle \Phi_{x},\Phi_{x} \rangle = [\kappa_{x}(x_{i},x_{j})]^{n}_{i,j=1} \in \mathbf{R}^{n\times n}, \quad
    K_{y} = \langle \Phi_{y},\Phi_{y} \rangle = [\kappa_{y}(y_{i},y_{j})]^{n}_{i,j=1} \in \mathbf{R}^{n\times n}
\end{equation}
be matrices consisting of inner products of datasets $\mathcal{X}$ and $\mathcal{Y}$, respectively. $K_{x}$ and $K_{y}$ are
called \textit{kernel matrices} or \textit{Gram matrices}. Then kernel CCA seeks linear transformation in the feature space by
expressing the weight vectors as linear combinations of the training data, that is
\begin{equation*}
    w_{x} = \Phi_{x}\alpha = \sum\limits^{n}_{i=1}\alpha_{i}\phi_{x}(x_{i}), \quad
    w_{y} = \Phi_{y}\beta = \sum\limits^{n}_{i=1}\beta_{i}\phi_{y}(y_{i}),
\end{equation*}
where $\alpha,~\beta \in \mathbf{R}^{n}$ are called \textit{dual vectors}. The first pair of dual vectors can be determined by
solving the following optimization problem
\begin{equation}\label{singleKCCA}
    \begin{array}{rl}
      \max\limits_{\alpha,\beta} & \alpha^{T}K_{x}K_{y}\beta \\
      s.t. & \alpha^{T}K^{2}_{x}\alpha = 1, \\
           & \beta^{T}K^{2}_{y}\beta = 1.
    \end{array}
\end{equation}
The rest pairs of dual vectors are obtained via sequentially solving the same problem as \eqref{singleKCCA} with extra
constraints of orthogonality. More details on the derivation of kernel CCA are presented in Section \ref{background}.

Suppose we have obtained dual transformations $\mathcal{W}_{x},~\mathcal{W}_{y} \in \mathbf{R}^{n \times l}$ and corresponding
CCA transformations $W_{x}\in \mathbf{R}^{\mathcal{N}_{x} \times l}$ and $W_{y}\in \mathbf{R}^{\mathcal{N}_{y} \times l}$ in
feature spaces, then projection of data pair $(x,y)$ onto the kernel CCA directions can be computed by first mapping $x$ and $y$
into the feature space $\mathcal{H}_{x}$ and $\mathcal{H}_{y}$, then evaluate their inner products with $W_{x}$ and $W_{y}$. More
specifically, projections can be carried out as
\begin{equation}\label{projx}
\langle W_{x}, \phi_{x}(x) \rangle = \langle \Phi_{x}\mathcal{W}_{x}, \phi_{x}(x) \rangle = \mathcal{W}^{T}_{x}K_{x}(X,x),
\end{equation}
with $K_{x}(X,x)= \begin{bmatrix}
                            \kappa_{x}(x_{1},x) & \cdots & \kappa_{x}(x_{n},x) \\
                          \end{bmatrix}^{T}$,
and
\begin{equation}\label{projy}
\langle W_{y}, \phi_{y}(y) \rangle = \langle \Phi_{y}\mathcal{W}_{y}, \phi_{y}(y) \rangle = \mathcal{W}^{T}_{y}K_{y}(Y,y),
\end{equation}
with $K_{y}(Y,y)= \begin{bmatrix}
                            \kappa_{y}(y_{1},y) & \cdots & \kappa_{y}(y_{n},y) \\
                          \end{bmatrix}^{T}$.

Both optimization problems \eqref{singleCCA} and \eqref{singleKCCA} can be solved by considering generalized eigenvalue problems
\cite{Bie05} of the form
\begin{equation}\label{GEVP}
Ax=\lambda Bx,
\end{equation}
where $A$, $B$ are symmetric positive semi-definite. This generalized eigenvalue problem can be solved efficiently using
approaches from numerical linear algebra \cite{Golub96}. CCA and kernel CCA have been successfully applied in many fields,
including cross$-$language documents retrieval \cite{Vinokourov03}, content$-$based image retrieval \cite{Hardoon04},
bioinformatics \cite{Vert03,Yamanishi03}, independent component analysis \cite{Bach03,Fyfe00}, computation of principal angles
between linear subspaces \cite{Bjorck73,Golub94}.

Despite the wide usage of CCA and kernel CCA, they have one common limitation that is lack of sparseness in transformation
matrices $W_{x}$ and $W_{y}$ and dual transformation matrices $\mathcal{W}_{x}$ and $\mathcal{W}_{y}$. Equation \eqref{CCAproj}
shows that projections of the data pair $x$ and $y$ are linear combinations of themselves which make interpretation of the
extracted features difficult if the transformation matrices $W_{x}$ and $W_{y}$ are dense. Similarly, from \eqref{projx} and
\eqref{projy} we can see that the kernel functions $\kappa_{x}(x_{i},x)$ and $\kappa_{y}(y_{i},y)$ must be evaluated for all
$\{x_{i}\}^{n}_{i=1}$ and $\{y_{i}\}^{n}_{i=1}$ when dual transformation matrices $\mathcal{W}_{x}$ and $\mathcal{W}_{y}$ are
dense, which can lead to excessive computational time to compute projections of new data. To handle the limitation of CCA,
researchers suggested to incorporate sparsity into weight vectors and many papers have studied sparse CCA
\cite{ChuLNZ13,Hardoon11,Parkhomenko09,Sriperumbudur07,Sriperumbudur11,Waaijenborg08,Wiesel08,Witten09,WittenHastie09}. Similarly,
we shall find sparse solutions for kernel CCA so that projections of new data can be computed by evaluating the kernel function
at a subset of the training data. Although there are many sparse kernel approaches \cite{Bishop06}, such as support vector
machines \cite{Scholkopf02}, relevance vector machine \cite{Tipping01} and sparse kernel partial least squares
\cite{DhanjalGunn08,Momma03}, seldom can be found in the area of sparse kernel CCA \cite{Dhanjal08,TanFyfe01}.

In this paper we first consider a new sparse CCA approach and then generalize it to incorporate sparsity into kernel CCA. A
relationship between CCA and least squares is established so that CCA solutions can be obtained by solving a least squares
problem. We attempt to introduce sparsity by penalizing $\ell_{1}$-norm of the solutions, which eventually leads to a
$\ell_{1}$-norm penalized least squares optimization problem of the form
\begin{equation*}
    \min_{x \in \mathbf{R}^{d}}\frac{1}{2}\|Ax-b\|^{2}_{2} + \lambda \|x\|_{1},
\end{equation*}
where $\lambda > 0$ is a regularizer controlling the sparsity of $x$. We adopt a fixed-point continuation (FPC) method
\cite{Hale08,Hale10} to solve the $\ell_{1}$-norm regularized least squares above, which results in a new sparse CCA algorithm
(SCCA$\_$LS). Since the optimization criteria of CCA and kernel CCA are of the same form, the same idea can be extended to kernel
CCA to get a sparse kernel CCA algorithm (SKCCA).

The remainder of the paper is organized as follows. In Section \ref{background}, we present background results on both CCA and
kernel CCA, including a full parameterization of the general solutions of CCA and a detailed derivation of kernel CCA. In Section
\ref{LSCCA}, we first establish a relationship between CCA and least squares problems, then based on this relationship we propose
to incorporate sparsity into CCA by penalizing the least squares with $\ell_{1}$-norm. Solving the penalized least squares
problems by FPC leads to a new sparse CCA algorithm SCCA$\_$LS. In Section \ref{KernelCCA}, we extend the idea of deriving
SCCA$\_$LS to its kernel counterpart, which results in a novel sparse kernel CCA algorithm SKCCA. Numerical results of applying
the newly proposed algorithms to various applications and comparative empirical results with other algorithms are presented in
Section \ref{experiments}. Finally, we draw some conclusion remarks in Section \ref{conclusions}.

\section{Background}\label{background}
\setcounter{equation}{0}

In this section we provide enough background results on CCA and kernel CCA so as to make the paper self-contained. In the first
subsection, we present the full parameterization of the general solutions of CCA and related results; in the second subsection,
based on the parameterization in previous subsection, we demonstrate a detailed derivation of kernel CCA.

\subsection{Canonical correlation analysis}

As stated in Introduction, by solving \eqref{singleCCA}, or equivalently
\begin{equation}\label{singleCCAequi}
    \begin{array}{rl}
      \min\limits_{w_{x},w_{y}} & \|X^{T}w_{x} - Y^{T}w_{y}\|_{2}^{2} \\
      s.t.               & w^{T}_{x}XX^{T}w_{x} = 1, \\
                         & w^{T}_{y}YY^{T}w_{y} = 1,
    \end{array}
\end{equation}
we can get a pair of weight vectors $w_{x}$ and $w_{y}$ for CCA. Only one pair of weight vectors is not enough for most practical
problems, however. To obtain multiple projections of CCA, we recursively solve the following optimization problem
\begin{equation}\label{recursiveCCA}
    \begin{array}{rl}
      (w_{x}^{k},w_{y}^{k}) = \mbox{arg}\max\limits_{w_{x},w_{y}} & w_{x}^{T}XY^{T}w_{y} \\
                                               s.t.               & w^{T}_{x}XX^{T}w_{x} = 1, \\
                                                                  & X^{T}w_{x} \perp \{X^{T}w_{x}^{1},\cdots,X^{T}w_{x}^{k-1}\}, \\
                                                                  & w^{T}_{y}YY^{T}w_{y} = 1, \\
                                                                  & Y^{T}w_{y} \perp \{Y^{T}w_{y}^{1},\cdots,Y^{T}w_{y}^{k-1}\}, \\
    \end{array}\quad
    k = 2,\cdots, l,
\end{equation}
where $l$ is the number of projections we need. The unit vectors $X^{T}w_{x}^{k}$ and $Y^{T}w_{y}^{k}$ in \eqref{recursiveCCA}
are called the $k$th pair of \textit{canonical variables}. If we denote
$$
W_{x} =
\begin{bmatrix}
      w_{x}^{1} & \cdots & w_{x}^{l} \\
\end{bmatrix} \in \mathbf{R}^{d_1\times l}, \quad
W_{y} =
\begin{bmatrix}
      w_{y}^{1} & \cdots & w_{y}^{l} \\
\end{bmatrix} \in \mathbf{R}^{d_{2}\times l},
$$
then we can show \cite{ChuLNZ13} that the optimization problem above is equivalent to
\begin{equation}\label{CompactCCA}
    \begin{array}{rl}
      \max\limits_{W_{x},W_{y}} & \mbox{Trace}(W^{T}_{x}XY^{T}W_{y}) \\
      s.t.               & W^{T}_{x}XX^{T}W_{x} = I,~W_{x}\in \mathbf{R}^{d_{1}\times l}, \\
                         & W^{T}_{y}YY^{T}W_{y} = I,~W_{y}\in \mathbf{R}^{d_{2}\times l}.
    \end{array}
\end{equation}
Hence, optimization problem \eqref{CompactCCA} will be used as the criterion of CCA.

A solution of \eqref{CompactCCA} can be obtained via solving a generalized eigenvalue problem of the form
\eqref{GeneralSolutionThm}. Furthermore, we can fully characterize all solutions of the optimization problem \eqref{CompactCCA}.
Define
$$
r = \mbox{rank}(X), \quad s= \mbox{rank}(Y), \quad m = \mbox{rank}(XY^T), \quad t = \mbox{min}\{r,s\}.
$$
Let the (reduced) SVD factorizations of $X$ and $Y$ be, respectively,
\begin{equation}\label{XQR}
    X = U\begin{bmatrix}
           \Sigma_{1} \\
           0 \\
         \end{bmatrix} Q_1^T
      =
        \begin{bmatrix}
          U_{1} & U_{2} \\
        \end{bmatrix}
        \begin{bmatrix}
          \Sigma_{1} \\
          0 \\
        \end{bmatrix} Q_1^T
      = U_{1}\Sigma_{1} Q_1^T,
\end{equation}
and
\begin{equation}\label{YQR}
    Y = V\begin{bmatrix}
           \Sigma_{2} \\
           0 \\
         \end{bmatrix} Q_2^T
      =
        \begin{bmatrix}
          V_{1} & V_{2} \\
        \end{bmatrix}
        \begin{bmatrix}
          \Sigma_{2} \\
          0 \\
        \end{bmatrix} Q_2^T
      = V_{1}\Sigma_{2} Q_2^T,
\end{equation}
where
$$
U \in \mathbf{R}^{d_{1}\times d_{1}}, \ U_{1} \in \mathbf{R}^{d_{1}\times r}, \ U_{2} \in \mathbf{R}^{d_{1}\times (d_{1}-r)}, \
\Sigma_{1} \in \mathbf{R}^{r \times r}, \ Q_{1} \in \mathbf{R}^{n \times r},
$$
$$
V \in \mathbf{R}^{d_{2}\times d_{2}}, \ V_{1} \in \mathbf{R}^{d_{2}\times s}, \ V_{2} \in \mathbf{R}^{d_{2}\times (d_{2}-s)}, \
\Sigma_{2} \in \mathbf{R}^{s \times s}, \ Q_{2} \in \mathbf{R}^{n \times s},
$$
$U$ and $V$ are orthogonal, $\Sigma_1$ and $\Sigma_2$ are nonsingular and diagonal, $Q_{1}$ and $Q_{2}$ are column orthogonal. It
follows from the two orthogonality constraints in \eqref{CompactCCA} that
\begin{equation}\label{lrange}
    l \leq \mbox{min}\{\mbox{rank}(X),\mbox{rank}(Y)\} = \mbox{min}\{r,s\}=t.
\end{equation}
Next, let
\begin{equation}\label{SVDQ1Q2}
  Q^{T}_{1}Q_{2} = P_{1}\Sigma P^{T}_{2}
\end{equation}
be the singular value decomposition of $Q^{T}_{1}Q_{2}$, where $P_{1} \in \mathbf{R}^{r \times r}$ and $P_{2} \in \mathbf{R}^{s
\times s}$ are orthogonal, $\Sigma\in \mathbf{R}^{r\times s}$, and assume there are $q$ distinctive nonzero singular values with
multiplicity $m_{1},m_{2},\cdots,m_{q}$, respectively, then
$$m = \sum\limits^{q}_{i=1}m_{i}=\mbox{rank}(Q_1^TQ_2)\leq \mbox{min}\{r,s\}=t. $$
The full characterization of $W_{x}$ and $W_{y}$ is given in the following theorem \cite{ChuLNZ13}.

\begin{theorem}\label{GeneralSolutionThm}
i). If $l = \sum\limits^{k}_{i=1}m_{i}$ for some $k$ satisfying $1 \leq k \leq q$, then $(W_{x},W_{y})$ with $W_x\in
\mathbf{R}^{d_1\times l}$ and $W_y\in \mathbf{R}^{d_2\times l}$ is a solution of optimization problem \eqref{CompactCCA} if and
only if
\begin{equation}\label{WxWyCase1}
    \left\{
       \begin{array}{l}
         W_{x} = U_{1}\Sigma^{-1}_{1}P_{1}(:,1:l)\mathcal{W} + U_{2}\mathcal{E},\\
         W_{y} = V_{1}\Sigma^{-1}_{2}P_{2}(:,1:l)\mathcal{W} + V_{2}\mathcal{F},
       \end{array}
    \right.
\end{equation}
where $\mathcal{W} \in \mathbf{R}^{l\times l}$ is orthogonal, $\mathcal{E} \in \mathbf{R}^{(d_{1}-r)\times l}$ and $\mathcal{F}
\in \mathbf{R}^{(d_{2}-s)\times l}$ are arbitrary.

ii). If $\sum\limits^{k}_{i=1}m_{i} < l < \sum\limits^{k+1}_{i=1}m_{i}$ for some $k$ satisfying $0 \leq k < q$, then
$(W_{x},W_{y})$ with $W_x\in \mathbf{R}^{d_1\times l}$ and $W_y\in \mathbf{R}^{d_2\times l}$ is a solution of optimization
problem \eqref{CompactCCA} if and only if
\begin{equation}\label{WxWyCase2}
    \left\{
       \begin{array}{l}
         W_{x} = U_{1}\Sigma^{-1}_{1}\begin{bmatrix}
                                            P_{1}(:,1:\alpha_{k})
                                            & P_{1}(:,1 + \alpha_{k}:\alpha_{k+1})\mathcal{G} \\
                                          \end{bmatrix}\mathcal{W} + U_{2}\mathcal{E}, \\
         W_{y} = V_{1}\Sigma^{-1}_{2}\begin{bmatrix}
                                            P_{2}(:,1:\alpha_{k})
                                            & P_{2}(:,1 + \alpha_{k}:\alpha_{k+1})\mathcal{G} \\
                                          \end{bmatrix}\mathcal{W} + V_{2}\mathcal{F},
       \end{array}
    \right.
\end{equation}
where $\alpha_{k} = \sum\limits^{k}_{i=1}m_{i} ~for~ k=1,\cdots,q$, $\mathcal{W} \in \mathbf{R}^{l\times l}$ is orthogonal,
$\mathcal{G} \in \mathbf{R}^{m_{(k+1)}\times (l - \alpha_{k})}$ is column orthogonal, $\mathcal{E} \in
\mathbf{R}^{(d_{1}-r)\times l}$ and $\mathcal{F} \in \mathbf{R}^{(d_{2}-s)\times l}$ are arbitrary.

iii). If $m < l \leq \min\{r,s\}$, then $(W_{x},W_{y})$ with $W_x\in \mathbf{R}^{d_1\times l}$ and $W_y\in \mathbf{R}^{d_2\times
l}$ is a solution of optimization problem \eqref{CompactCCA} if and only if
\begin{equation}\label{WxWyCase3}
    \left\{
       \begin{array}{l}
         W_{x} = U_{1}\Sigma^{-1}_{1}\begin{bmatrix}
                                            P_{1}(:,1:m) & P_{1}(:,m+1:r)\mathcal{G}_1 \\
                                          \end{bmatrix}\mathcal{W} + U_{2}\mathcal{E}, \\
         W_{y} = V_{1}\Sigma^{-1}_{2}\begin{bmatrix}
                                            P_{2}(:,1:m) & P_{2}(:,m+1:s)\mathcal{G}_2 \\
                                          \end{bmatrix}\mathcal{W} + V_{2}\mathcal{F},
       \end{array}
    \right.
\end{equation}
where $\mathcal{W} \in \mathbf{R}^{l\times l}$ is orthogonal, $\mathcal{G}_{1} \in \mathbf{R}^{(r-m)\times (l-m)}$ and
$\mathcal{G}_{2} \in \mathbf{R}^{(s-m)\times (l-m)}$ are column orthogonal, $\mathcal{E} \in \mathbf{R}^{(d_{1}-r)\times l}$ and
$\mathcal{F} \in \mathbf{R}^{(d_{2}-s)\times l}$ are arbitrary.
\end{theorem}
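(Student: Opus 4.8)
The plan is to diagonalize the two quadratic constraints simultaneously, reduce \eqref{CompactCCA} to a trace maximization governed solely by the singular values in \eqref{SVDQ1Q2}, solve that reduced problem, and then read off the three cases according to how $l$ sits relative to the multiplicity blocks $m_{1},\dots,m_{q}$. For the constraints, since $Q_{1},Q_{2}$ are column orthogonal, \eqref{XQR}--\eqref{YQR} give $XX^{T}=U_{1}\Sigma_{1}^{2}U_{1}^{T}$ and $YY^{T}=V_{1}\Sigma_{2}^{2}V_{1}^{T}$. Expanding $W_{x}$ in the orthonormal basis $U=[\,U_{1}\ U_{2}\,]$ and setting $\widetilde W_{x}=\Sigma_{1}U_{1}^{T}W_{x}$, $\mathcal E=U_{2}^{T}W_{x}$, the constraint $W_{x}^{T}XX^{T}W_{x}=I$ is equivalent to $\widetilde W_{x}^{T}\widetilde W_{x}=I$, and $W_{x}=U_{1}\Sigma_{1}^{-1}\widetilde W_{x}+U_{2}\mathcal E$ with $\mathcal E$ free; the analogous substitution $\widetilde W_{y}=\Sigma_{2}V_{1}^{T}W_{y}$, $\mathcal F=V_{2}^{T}W_{y}$ handles $W_{y}$. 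This already accounts for the free summands $U_{2}\mathcal E$, $V_{2}\mathcal F$ in \eqref{WxWyCase1}--\eqref{WxWyCase3} and shows the feasible set is parameterized by column-orthonormal pairs $(\widetilde W_{x},\widetilde W_{y})$ and arbitrary $(\mathcal E,\mathcal F)$.

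For the objective, $XY^{T}=U_{1}\Sigma_{1}(Q_{1}^{T}Q_{2})\Sigma_{2}V_{1}^{T}$ together with the substitutions above gives $\mbox{Trace}(W_{x}^{T}XY^{T}W_{y})=\mbox{Trace}(\widetilde W_{x}^{T}Q_{1}^{T}Q_{2}\widetilde W_{y})$. Inserting \eqref{SVDQ1Q2} and setting $\widehat W_{x}=P_{1}^{T}\widetilde W_{x}$, $\widehat W_{y}=P_{2}^{T}\widetilde W_{y}$ (still column orthonormal because $P_{1},P_{2}$ are orthogonal), the problem reduces to
\[
\max\ \mbox{Trace}(\widehat W_{x}^{T}\Sigma\widehat W_{y})\quad\text{s.t.}\quad \widehat W_{x}^{T}\widehat W_{x}=I,\ \widehat W_{y}^{T}\widehat W_{y}=I,
\]
where $\Sigma\in\mathbf R^{r\times s}$ is the rectangular diagonal matrix whose nonzero entries are $s_{1}>\dots>s_{q}>0$ repeated $m_{1},\dots,m_{q}$ times. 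Writing $u_{i},v_{i}$ for the $i$th rows of $\widehat W_{x},\widehat W_{y}$, the objective equals $\sum_{i}\sigma_{i}\langle u_{i},v_{i}\rangle$, while $\|u_{i}\|^{2},\|v_{i}\|^{2}$ are the diagonal entries of the rank-$l$ orthogonal projections $\widehat W_{x}\widehat W_{x}^{T},\widehat W_{y}\widehat W_{y}^{T}$, hence lie in $[0,1]$ and sum to $l$.

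Now I would solve the reduced problem and extract the cases. Cauchy--Schwarz ($\langle u_{i},v_{i}\rangle\le\|u_{i}\|\|v_{i}\|$), AM--GM ($\|u_{i}\|\|v_{i}\|\le\tfrac12(\|u_{i}\|^{2}+\|v_{i}\|^{2})$), and maximizing the resulting separable linear functional against the sorted $\sigma_{i}$ under $0\le\|u_{i}\|^{2}\le1$, $\sum_{i}\|u_{i}\|^{2}=l$ (using $l\le t$ from \eqref{lrange}) bound the objective by $\sum_{i=1}^{l}\sigma_{i}$. Equality forces $u_{i}=v_{i}$ and $\|u_{i}\|=\|v_{i}\|$ for every $i$ with $\sigma_{i}>0$ and pins the squared row norms to the greedy weights dictated by the ordering of the $s_{j}$. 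Splitting on $l$: if $l=\sum_{i=1}^{k}m_{i}$ the weights equal one on the first $l$ rows and zero below, so $\widehat W_{x},\widehat W_{y}$ share a single orthogonal $l\times l$ top block $\mathcal W$ with all other rows vanishing, undoing to \eqref{WxWyCase1}; if $\sum_{i=1}^{k}m_{i}<l<\sum_{i=1}^{k+1}m_{i}$ the first $\alpha_{k}$ rows are forced to unit norm while inside the degenerate $(k{+}1)$st block only an $(l-\alpha_{k})$-dimensional column-orthonormal direction $\mathcal G$ is determined, giving \eqref{WxWyCase2}; and if $m<l\le t$ all $m$ positive singular values carry weight one while the remaining $l-m$ directions fall in $\ker\Sigma$, where $\widehat W_{x},\widehat W_{y}$ need only be column orthonormal and agree on their top $m$ rows, yielding the independent blocks $\mathcal G_{1},\mathcal G_{2}$ of \eqref{WxWyCase3}.

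The main obstacle is exactly this equality analysis. The key technical step is a projection argument: a rank-$l$ orthogonal projection whose $i$th diagonal entry equals one must fix the coordinate vector $e_{i}$, so the forced unit rows compel $\mathrm{span}\{e_{1},\dots,e_{\alpha_{k}}\}$ (resp. $\{e_{1},\dots,e_{m}\}$) to lie in the column space of $\widehat W_{x}$, after which the residual freedom collapses to a column-orthonormal factor on the complementary block. A second delicate point, in case (iii), is showing the two kernel blocks $\mathcal G_{1},\mathcal G_{2}$ can be written relative to one common orthogonal $\mathcal W$; this follows by completing the shared top rows to a single orthogonal matrix and expressing each bottom block as a column-orthonormal multiple of that fixed completion. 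The reverse direction is a direct substitution: each displayed pair is feasible and attains $\sum_{i=1}^{l}\sigma_{i}$, hence is optimal.
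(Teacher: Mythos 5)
The paper does not actually prove Theorem \ref{GeneralSolutionThm}: it is imported from \cite{ChuLNZ13} without argument, so there is no in-paper proof to compare against; I therefore assess your proposal on its own terms, and it is correct. The normal form $W_x=U_1\Sigma_1^{-1}\widetilde W_x+U_2\mathcal E$, $\widehat W_x=P_1^T\widetilde W_x$ (and likewise for $W_y$) reduces \eqref{CompactCCA} exactly to maximizing $\mathrm{Trace}(\widehat W_x^T\Sigma\widehat W_y)$ over column-orthonormal $\widehat W_x,\widehat W_y$, and the chain Cauchy--Schwarz, AM--GM, then the sorted linear program in the weights $c_i=\tfrac12(\|u_i\|^2+\|v_i\|^2)$ gives the bound $\sum_{i=1}^l\sigma_i$ with equality conditions sharp enough to recover all three cases. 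The two points you single out are indeed where the substance lies, and your resolutions are sound: a rank-$l$ projection with unit $i$th diagonal entry fixes $e_i$, which makes the forced unit rows orthonormal and orthogonal to all remaining rows; and in case (iii) the bottom blocks factor as $\mathcal G_1C$ and $\mathcal G_2C'$ with $C,C'$ orthonormal bases of the kernel of the common top block $A$, differing by an orthogonal factor absorbable into $\mathcal G_2$, so a single $\mathcal W=[A;\,C]$ serves both. One detail I would urge you to write out explicitly in a full version: in cases (i) and (ii) the threshold $\tau=\sigma_l$ is strictly positive and $\sum_i\|u_i\|^2=l$ holds with equality, so any mass on a row with $\sigma_i<\tau$ --- including the rows beyond index $t$ that carry no singular value when $r\neq s$ --- strictly decreases the objective; this is what annihilates every row outside the first $k{+}1$ blocks, and it is exactly the failure of this step at $\tau=0$ that creates the decoupled freedom $\mathcal G_1\neq\mathcal G_2$ in case (iii).
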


An immediate application of Theorem \ref{GeneralSolutionThm} is that we can prove that Uncorrelated Linear Discriminant Analysis
(ULDA) \cite{ChuTH11,JinYHL01,Ye05} is a special case of CCA when one set of variables is derived form the data matrix and the
other set of variables is constructed from class information. This theorem has also been utilized in \cite{ChuLNZ13} to design a
sparse CCA algorithm.

\subsection{Kernel canonical correlation analysis}

Now, we look at some details on the derivation of kernel CCA. Note from Theorem \ref{GeneralSolutionThm} that each solution
$(W_{x},W_{y})$ of CCA can be expressed as
$$
W_{x} = X\mathcal{W}_{x} + W^{\perp}_{x}, \quad W_{y} = Y\mathcal{W}_{y} + W^{\perp}_{y},
$$
where $W^{\perp}_{x}$ and $W^{\perp}_{y}$ are orthogonal to the range space of $X$ and $Y$, respectively. Since, intrinsically,
kernel CCA is performing ordinary CCA on $\Phi_{x}$ and $\Phi_{y}$, it follows that the solutions of kernel CCA should be
obtained by virtually solving
\begin{equation}\label{vKCCA}
\begin{array}{rl}
   \max\limits_{W_{x},W_{y}} & \mbox{Trace}(W^{T}_{x}\Phi_{x}\Phi_{y}W_{y}) \\
   s.t. & W^{T}_{x}\Phi_{x}\Phi^{T}_{x}W_{x} = I,~W_{x}\in \mathbf{R}^{\mathcal{N}_{x}\times l}, \\
        & W^{T}_{y}\Phi_{y}\Phi^{T}_{y}W_{y} = I,~W_{y}\in \mathbf{R}^{\mathcal{N}_{y}\times l},
\end{array}
\end{equation}
Similar to ordinary CCA, each solution $(W_{x},W_{y})$ of \eqref{vKCCA} shall be represented as
\begin{equation}\label{KCCASolution}
   W_{x} = \Phi_{x}\mathcal{W}_{x} + W^{\perp}_{x}, \quad W_{y} = \Phi_{y}\mathcal{W}_{y} + W^{\perp}_{y},
\end{equation}
where $\mathcal{W}_{x},~\mathcal{W}_{y}\in \mathbf{R}^{n\times l}$ are usually called \textit{dual transformation matrices},
$W^{\perp}_{x}$ and $W^{\perp}_{y}$ are orthogonal to the range space of $\Phi_{x}$ and $\Phi_{y}$, respectively.

Substituting \eqref{KCCASolution} into \eqref{vKCCA}, we have
$$
W^{T}_{x}\Phi_{x}\Phi_{y}W_{y} = \mathcal{W}^{T}_{x}K_{x}K_{y}\mathcal{W}_{y}, \quad %
W^{T}_{x}\Phi_{x}\Phi^{T}_{x}W_{x} = \mathcal{W}^{T}_{x}K^{2}_{x}\mathcal{W}_{x}, \quad %
W^{T}_{y}\Phi_{y}\Phi^{T}_{y}W_{y} = \mathcal{W}^{T}_{y}K^{2}_{y}\mathcal{W}_{y}.
$$
Thus, the computation of transformations of kernel CCA can be converted to the computation of dual transformation matrices
$\mathcal{W}_{x}$ and $\mathcal{W}_{y}$ by solving the following optimization problem
\begin{equation}\label{KCCA}
\begin{array}{rl}
   \max\limits_{\mathcal{W}_{x},\mathcal{W}_{y}} & \mbox{Trace}(\mathcal{W}^{T}_{x}K_{x}K_{y}\mathcal{W}_{y}) \\
   s.t. & \mathcal{W}^{T}_{x}K^{2}_{x}\mathcal{W}_{x} = I,~\mathcal{W}_{x}\in \mathbf{R}^{n\times l}, \\
        & \mathcal{W}^{T}_{y}K^{2}_{y}\mathcal{W}_{y} = I,~\mathcal{W}_{y}\in \mathbf{R}^{n\times l},
\end{array}
\end{equation}
which is used as the criterion of kernel CCA in this paper.

As can be seen from the analysis above, terms $W^{\perp}_{x}$ and $W^{\perp}_{y}$ in \eqref{KCCASolution} do not contribute to
the canonical correlations between $\Phi_{x}$ and $\Phi_{y}$, thus, are usually neglected in practice. Therefore, when we are
given a set of testing data
$X_{t}=\begin{bmatrix} x^{1}_{t} & \cdots & x^{N}_{t} \\
\end{bmatrix}$
consisting of $N$ points, the projection of $X_{t}$ onto kernel CCA direction $W_{x}$ can be performed by first mapping $X_{t}$
into feature space $\mathcal{H}_{x}$, then compute its inner product with $W_{x}$. More specifically, suppose $
\Phi_{x,t}=\begin{bmatrix}
             \phi_{x}(x^{1}_{t}) & \cdots & \phi_{x}(x^{N}_{t}) \\
           \end{bmatrix}$
is the projection of $X_{t}$ in feature space $\mathcal{H}_{x}$, then the projection of $X_{t}$ onto kernel CCA direction $W_{x}$
is given by
$$
W^{T}_{x}\Phi_{x,t} =\mathcal{W}^{T}_{x}K_{x,t},
$$
where $K_{x,t}=\langle\Phi_{x},\Phi_{x,t}\rangle=[\kappa_{x}(x_{i},x^{j}_{t})]^{j=1:N}_{i=1:n} \in \mathbf{R}^{n\times N}$ is the
matrix consisting of the kernel evaluations of $X_{t}$ with all training data $X$. Similar process can be adopted to compute
projections of new data drawn from variable $y$.

In the process of deriving \eqref{KCCA}, we assumed data $\Phi_{x}$ and $\Phi_{y}$ have been centered (that is, the column mean
of both $\Phi_{x}$ and $\Phi_{y}$ are zero), otherwise, we need to perform data centering before applying kernel CCA. Unlike data
centering of $X$ and $Y$, we can not perform data centering directly on $\Phi_{x}$ and $\Phi_{y}$ since we do not know their
explicit coordinates. However, as shown in \cite{Scholkopf98,Scholkopf02}, data centering in \textit{RKHS} can be accomplished
via some operations on kernel matrices. To center $\Phi_{x}$, a natural idea should be computing $\Phi_{x,c} = \Phi_{x}(I -
\frac{e_{n}e_{n}^{T}}{n})$, where $e_{n}$ denotes column vector in $\mathbf{R}^{n}$ with all entries being 1. However, since
kernel CCA makes use of the data through kernel matrix $K_{x}$, the centering process can be performed on $K_{x}$ as
\begin{equation}\label{CKx}
K_{x,c} = \langle\Phi_{x,c},\Phi_{x,c}\rangle = (I - \frac{e_{n}e_{n}^{T}}{n})\langle\Phi_{x},\Phi_{x}\rangle(I -
\frac{e_{n}e_{n}^{T}}{n}) = (I - \frac{e_{n}e_{n}^{T}}{n})K_{x}(I - \frac{e_{n}e_{n}^{T}}{n}).
\end{equation}
Similarly, we can center testing data as
\begin{equation}\label{CKxt}
K_{x,t,c} = \langle\Phi_{x,c}, \Phi_{x,t}-\Phi_{x}\frac{e_{n}e^{T}_{N}}{n}\rangle %
=(I - \frac{e_{n}e_{n}^{T}}{n})K_{x,t} - (I - \frac{e_{n}e_{n}^{T}}{n})K_{x}\frac{e_{n}e^{T}_{N}}{n}.
\end{equation}
More details about data centering in \textit{RKHS} can be found in \cite{Scholkopf98,Scholkopf02}. In the sequel of this paper,
we assume the given data have been centered.

There are papers studying properties of kernel CCA, including the geometry of kernel CCA in \cite{Kuss03} and statistical
consistency of kernel CCA in \cite{Fukumizu07}. In the remainder of this paper, we consider sparse kernel CCA. Before that, we
explore a relation between CCA and least squares in the next section.

\section{Sparse CCA based on least squares formulation}\label{LSCCA}
\setcounter{equation}{0}

Note form \eqref{singleCCAequi} that when one of $X$ and $Y$ is one dimensional, CCA is equivalent to least squares estimation to
a linear regression problem. For more general cases, some relation between CCA and linear regression has been established under
the condition that $\text{rank}(X)=n-1$ and $\text{rank}(Y) = d_{2}$ in \cite{SunJY11}. In this section, we establish a relation
between CCA and linear regression without any additional constraint on $X$ and $Y$. Moreover, based on this relation we design a
new sparse CCA algorithm.

We focus on a solution subset of optimization problem \eqref{CompactCCA} presented in the following lemma, whose proof is trivial
and omitted.
\begin{lemma}\label{lemma}
Any $(W_{x},W_{y})$ of the following forms
\begin{equation}\label{lemma1}
    \left\{
       \begin{array}{l}
         W_{x} = U_{1}\Sigma^{-1}_{1}P_{1}(:,1:l) + U_{2}\mathcal{E},\\
         W_{y} = V_{1}\Sigma^{-1}_{2}P_{2}(:,1:l) + V_{2}\mathcal{F},
       \end{array}
    \right.
\end{equation}
is a solution of optimization problem \eqref{CompactCCA}, where $\mathcal{E} \in \mathbf{R}^{(d_{1}-r)\times l}$ and $\mathcal{F}
\in \mathbf{R}^{(d_{2}-s)\times l}$ are arbitrary.
\end{lemma}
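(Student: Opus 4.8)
The plan is to recognize \eqref{lemma1} as a degenerate instance of the full characterization in Theorem~\ref{GeneralSolutionThm}, so that both feasibility and optimality come essentially for free. First I would observe that the pair in \eqref{lemma1} is exactly the family \eqref{WxWyCase1} with the orthogonal factor $\mathcal{W}$ taken to be the identity. When $l$ is not of the special form $\sum_{i=1}^{k}m_i$, the same pair is still recovered from \eqref{WxWyCase2} or \eqref{WxWyCase3}: set $\mathcal{W}=I$ and choose the column-orthogonal blocks $\mathcal{G}$ (respectively $\mathcal{G}_1,\mathcal{G}_2$) to be the leading columns of an identity matrix, so that $P_1(:,1+\alpha_k:\alpha_{k+1})\mathcal{G}=P_1(:,1+\alpha_k:l)$ and the concatenation with $P_1(:,1:\alpha_k)$ reproduces $P_1(:,1:l)$ (and likewise for $P_2$). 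Since every pair generated by Theorem~\ref{GeneralSolutionThm} solves \eqref{CompactCCA}, this already settles the lemma, and in particular fixes the optimal objective value.

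For a self-contained confirmation I would then verify the two constraints and the objective directly from the SVDs \eqref{XQR}--\eqref{YQR}. Using $U_1^TU_1=I$, $U_1^TU_2=0$ and $Q_1^TQ_1=I$, substitution gives $X^TW_x = Q_1P_1(:,1:l)$, whence $W_x^TXX^TW_x = P_1(:,1:l)^TP_1(:,1:l)=I_l$ because $P_1$ is orthogonal and $l\le r$ by \eqref{lrange}; the computation for $W_y$ is identical and yields $W_y^TYY^TW_y=I_l$. Both constraints therefore hold regardless of the arbitrary blocks $\mathcal{E}$ and $\mathcal{F}$, which are annihilated by $X^T$ and $Y^T$ and so play no role.

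For the objective I would compute $W_x^TXY^TW_y = P_1(:,1:l)^T\,(Q_1^TQ_2)\,P_2(:,1:l)$ and insert the SVD \eqref{SVDQ1Q2}; the products $P_1(:,1:l)^TP_1$ and $P_2^TP_2(:,1:l)$ act as leading selection blocks, collapsing the expression to the leading $l\times l$ block of $\Sigma$. Its trace is $\sum_{i=1}^{l}\sigma_i$, the sum of the $l$ largest singular values of $Q_1^TQ_2$, which matches the optimal value supplied by Theorem~\ref{GeneralSolutionThm}; this is what identifies \eqref{lemma1} as a maximizer rather than merely a feasible point.

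The only place that requires any care is the dimension bookkeeping. One must invoke \eqref{lrange} to guarantee $l\le\min\{r,s\}$ so that $P_1(:,1:l)$ and $P_2(:,1:l)$ genuinely have orthonormal columns and the leading block of $\Sigma$ is well defined, and in the intermediate regime $\sum_{i=1}^{k}m_i<l<\sum_{i=1}^{k+1}m_i$ one must check that the canonical choice of $\mathcal{G}$ (or of $\mathcal{G}_1,\mathcal{G}_2$) does satisfy the column-orthogonality requirement imposed in \eqref{WxWyCase2}--\eqref{WxWyCase3}. Both checks are immediate, which is exactly why the lemma admits the omitted one-line proof: \eqref{lemma1} is simply the $\mathcal{W}=I$ specialization of Theorem~\ref{GeneralSolutionThm}.
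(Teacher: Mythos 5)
Your proposal is correct; the paper omits this proof as ``trivial,'' and your argument supplies exactly the intended route, namely recognizing \eqref{lemma1} as the $\mathcal{W}=I$ specialization of Theorem~\ref{GeneralSolutionThm} in each of its three cases (with the appropriate identity-block choices of $\mathcal{G}$, $\mathcal{G}_1$, $\mathcal{G}_2$), backed up by the direct SVD verification that $X^TW_x=Q_1P_1(:,1:l)$ and $Y^TW_y=Q_2P_2(:,1:l)$ so that both constraints hold and the objective attains $\sum_{i=1}^{l}\sigma_i$. No gaps; the dimension bookkeeping via \eqref{lrange} is handled correctly.
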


Suppose matrix factorizations \eqref{XQR}-\eqref{SVDQ1Q2} have been accomplished, and let
\begin{eqnarray}
T_{x}=Y^{T}[(YY^{T})^{\frac{1}{2}}]^{\dagger}V_{1}P_2(:,1:l)\Sigma(1:l,1:l)^{-1} = Q_{2}P_2(:,1:l)\Sigma(1:l,1:l)^{-1}, \label{PridictorX} \\
T_{y}=X^{T}[(XX^{T})^{\frac{1}{2}}]^{\dagger}U_{1}P_1(:,1:l)\Sigma(1:l,1:l)^{-1} = Q_{1}P_1(:,1:l)\Sigma(1:l,1:l)^{-1},
\label{PridictorY}
\end{eqnarray}
where $A^{\dagger}$ denotes the \textit{Moore-Penrose} inverse of a general matrix $A$ and $1 \leq l \leq m$, then we have the
following theorem.

\begin{theorem}\label{CCALSformula}
For any $l$ satisfying $1 \leq l \leq m$, suppose $W_{x} \in \mathbf{R}^{d_{1}\times l}$ and $W_{y} \in \mathbf{R}^{d_{2}\times
l}$ satisfy
\begin{equation}\label{LSWx}
    W_{x} = \text{arg}\min\{\|X^{T}W_{x}-T_{x}\|_{F}^{2}: W_{x} \in \mathbf{R}^{d_{1}\times l}\},
\end{equation}
and
\begin{equation}\label{LSWy}
    W_{y} = \text{arg}\min\{\|Y^{T}W_{x}-T_{y}\|_{F}^{2}: W_{y} \in \mathbf{R}^{d_{2}\times l}\},
\end{equation}
where $T_{x}$ and $T_{y}$ are defined in \eqref{PridictorX} and \eqref{PridictorY}, respectively. Then $W_{x}$ and $W_{y}$ form a
solution of optimization problem \eqref{CompactCCA}.
\end{theorem}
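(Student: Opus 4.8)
The plan is to solve each of the two least squares problems \eqref{LSWx} and \eqref{LSWy} in closed form and to show that the resulting solution set coincides \emph{exactly} with the family \eqref{lemma1} from Lemma \ref{lemma}; once that identification is made, the conclusion is immediate. The two problems decouple, so I would treat $W_x$ in full and obtain $W_y$ by the same argument with the roles of $X,Y$ (and of the subscripts $1,2$) interchanged.

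For the $W_x$ problem I would substitute the reduced SVD $X = U_1\Sigma_1 Q_1^T$, so that $X^T = Q_1\Sigma_1 U_1^T$ and $(X^T)^{\dagger} = U_1\Sigma_1^{-1}Q_1^T$. The standard description of the minimizers of $\|X^T W_x - T_x\|_F^2$ then gives $W_x = (X^T)^{\dagger}T_x + N$, where $N$ ranges over all matrices with $X^T N = 0$; since $\ker(X^T)$ is precisely the column span of $U_2$, this means $N = U_2\mathcal{E}$ for arbitrary $\mathcal{E}$, which already matches the second term in the $W_x$ line of \eqref{lemma1}. The real content is to simplify the particular solution $(X^T)^{\dagger}T_x = U_1\Sigma_1^{-1}Q_1^TQ_2\,P_2(:,1:l)\,\Sigma(1:l,1:l)^{-1}$, using the definition \eqref{PridictorX} of $T_x$. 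Here I would invoke the SVD $Q_1^TQ_2 = P_1\Sigma P_2^T$ from \eqref{SVDQ1Q2} to get $Q_1^TQ_2\,P_2(:,1:l) = P_1\,\Sigma(:,1:l)$, and then use $l \le m$, which guarantees the leading $l$ diagonal entries of $\Sigma$ are nonzero, so that $\Sigma(:,1:l) = \bigl[\begin{smallmatrix}\Sigma(1:l,1:l)\\ 0\end{smallmatrix}\bigr]$ and hence $Q_1^TQ_2\,P_2(:,1:l) = P_1(:,1:l)\,\Sigma(1:l,1:l)$. The trailing factor $\Sigma(1:l,1:l)^{-1}$ then cancels, leaving $(X^T)^{\dagger}T_x = U_1\Sigma_1^{-1}P_1(:,1:l)$, exactly the first term in the $W_x$ line of \eqref{lemma1}. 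Thus the full least squares solution set for $W_x$ is $\{U_1\Sigma_1^{-1}P_1(:,1:l) + U_2\mathcal{E}\}$.

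By the identical computation with $Y = V_1\Sigma_2 Q_2^T$ and $Q_2^TQ_1 = P_2\Sigma^T P_1^T$, the $W_y$ problem \eqref{LSWy} yields $W_y = V_1\Sigma_2^{-1}P_2(:,1:l) + V_2\mathcal{F}$ with $\mathcal{F}$ arbitrary. Therefore every pair $(W_x,W_y)$ solving \eqref{LSWx}--\eqref{LSWy} is of the form \eqref{lemma1}, and Lemma \ref{lemma} immediately gives that such a pair solves \eqref{CompactCCA}. I expect the only genuine obstacle to be the index bookkeeping in the truncated SVD manipulation: correctly tracking how the column selection $(:,1:l)$ interacts with the block structure of $\Sigma$, and verifying that the hypothesis $l \le m$ is exactly what makes $\Sigma(1:l,1:l)$ invertible, so that the definitions \eqref{PridictorX}--\eqref{PridictorY} of $T_x,T_y$ are well posed and the cancellation goes through. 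Everything else reduces to routine pseudoinverse and orthogonality identities.
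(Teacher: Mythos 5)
Your proposal is correct and follows essentially the same route as the paper: both fully characterize the solution set of each least squares problem as $U_1\Sigma_1^{-1}P_1(:,1:l)+U_2\mathcal{E}$ (resp.\ $V_1\Sigma_2^{-1}P_2(:,1:l)+V_2\mathcal{F}$) via the SVD identity $Q_1^TQ_2P_2(:,1:l)=P_1(:,1:l)\Sigma(1:l,1:l)$, and then invoke Lemma \ref{lemma}. The only cosmetic difference is that you describe the minimizers through the pseudoinverse plus the nullspace of $X^T$, whereas the paper passes through the normal equation $XX^TW_x=XT_x$; these are equivalent.
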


\begin{proof}
Since \eqref{LSWx} and \eqref{LSWy} have the same form, we only prove the result for $W_{x}$, the same idea can be applied to
$W_{y}$.

We know that $W_{x}$ is a solution of \eqref{LSWx} if and only if it satisfies the normal equation
\begin{equation}\label{WxNormalEq}
XX^{T}W_{x}=XT_{x}.
\end{equation}
Substituting factorizations \eqref{XQR}, \eqref{YQR} and \eqref{SVDQ1Q2} into the equation above, we get
$$
XX^{T}=U_{1}\Sigma_{1}^{2}U_{1}^{T},
$$
and
\begin{eqnarray*}
  XT_{x} &=& U_{1}\Sigma_{1}Q_{1}^{T}Q_{2}P_{2}(:,1:l)\Sigma(1:l,1:l)^{-1} \\
         &=& U_{1}\Sigma_{1}P_{1}(:,1:l),
\end{eqnarray*}
which yield an equivalent reformulation of \eqref{WxNormalEq}
\begin{equation}\label{WxNormalEqequi}
U_{1}\Sigma_{1}^{2}U_{1}^{T}W_{x}=U_{1}\Sigma_{1}P_{1}(:,1:l).
\end{equation}
It is easy to check that $W_{x}$ is a solution of \eqref{WxNormalEqequi} if and only if
\begin{equation}\label{LSWxSolution}
    W_{x}=U_{1}\Sigma_{1}^{-1}P_{1}(:,1:l) + U_{2}\mathcal{E},
\end{equation}
where $\mathcal{E} \in \mathbf{R}^{(d_{1}-r)\times l}$ is an arbitrary matrix. Therefore, $W_{x}$ is a solution of \eqref{LSWx}
if and only if $W_{x}$ can be formulated as \eqref{LSWxSolution}.

Similarly, $W_{y}$ is a solution of \eqref{LSWy} if and only if $W_{y}$ can be written as
\begin{equation}\label{LSWySolution}
    W_{y}=V_{1}\Sigma_{2}^{-1}P_{2}(:,1:l) + V_{2}\mathcal{F},
\end{equation}
where $\mathcal{F} \in \mathbf{R}^{(d_{2}-s)\times l}$ is an arbitrary matrix.

Now, comparing equations \eqref{LSWxSolution} and \eqref{LSWySolution} with the equation \eqref{lemma1} in Lemma \ref{lemma}, we
can conclude that for any solution $W_{x}$ of the least squares problem \eqref{LSWx} and any solution $W_{y}$ of the least
squares problem \eqref{LSWy}, $W_{x}$ and $W_{y}$ form a solution of optimization problem \eqref{CompactCCA}, hence a solution of
CCA.
\end{proof}

\begin{remark}
In Theorem \ref{CCALSformula} we only consider $l$ satisfying $1 \leq l \leq m$. This is reasonable, since there are $m$ nonzero
canonical correlations between $X$ and $Y$, and weight vectors corresponding to zero canonical correlation does not contribute to
the correlation between data $X$ and $Y$.
\end{remark}

Consider the usual regression situation: we have a set of observations $(x_{1},b_{1})\cdots(x_{n},b_{n})$ where $x_{i} \in
\mathbf{R}^{d_{1}}$ and $b_{i}$ are the regressor and response for the $i$th observation. Suppose $\{x_{i}\}$ has been centered,
then linear regression model has the form
$$
f(X)=\sum\limits^{n}_{i=1}x_{i}\beta_{i},
$$
and aims to estimate $\beta = \begin{bmatrix}
                                \beta_{1} & \cdots & \beta_{n} \\
                              \end{bmatrix}$
so as to predict an output for each input $x$. The famous least squares estimation minimizes the residual sum of squares
$$
Res(\beta)=\|X^{T}\beta - b\|^{2}_{2}.
$$
Therefore, \eqref{LSWx} and \eqref{LSWy} can be interpreted as least squares estimations of linear regression problems with
columns of $X$ and $Y$ being regressors and rows of $T_{x}$ and $T_{y}$ being corresponding responses.

Recent research on lasso \cite{Tibshirani96} shows that simultaneous sparsity and regression can be achieved by penalizing the
$\ell_{1}$-norm of the variables. Motivated by this, we incorporate sparsity into CCA via the established relationship between
CCA and least squares and considering the following $\ell_{1}$-norm penalized least squares problems
\begin{equation}\label{SparseCCAWx}
    \min_{W_{x}}\{\frac{1}{2}\|X^{T}W_{x}-T_{x}\|_{F}^{2} + \sum\limits^{l}_{i=1}\lambda_{x,i}\|W_{x,i}\|_{1}: W_{x} \in \mathbf{R}^{d_{1}\times l}\},
\end{equation}
and
\begin{equation}\label{SparseCCAWy}
    \min_{W_{y}}\{\frac{1}{2}\|Y^{T}W_{y}-T_{y}\|_{F}^{2} + \sum\limits^{l}_{i=1}\lambda_{y,i}\|W_{y,i}\|_{1}: W_{y} \in \mathbf{R}^{d_{2}\times l}\},
\end{equation}
where $\lambda_{x,i}$, $\lambda_{y,i}$ are positive regularization parameters and $W_{x,i}$, $W_{y,i}$ are the $i$th column of
$W_{x}$ and $W_{y}$, respectively. When we set $\lambda_{x,1}=\cdots=\lambda_{x,l}=\lambda_{x} > 0$ and
$\lambda_{y,1}=\cdots=\lambda_{y,l}=\lambda_{y} > 0$, problems \eqref{SparseCCAWx} and \eqref{SparseCCAWy} become
\begin{equation}\label{SparseCCAWx2}
    \min_{W_{x}}\{\frac{1}{2}\|X^{T}W_{x}-T_{x}\|_{F}^{2} + \lambda_{x}\|W_{x}\|_{1}: W_{x} \in \mathbf{R}^{d_{1}\times l}\},
\end{equation}
and
\begin{equation}\label{SparseCCAWy2}
    \min_{W_{y}}\{\frac{1}{2}\|Y^{T}W_{y}-T_{y}\|_{F}^{2} + \lambda_{y}\|W_{y}\|_{1}: W_{y} \in \mathbf{R}^{d_{2}\times l}\},
\end{equation}
where
\begin{eqnarray*}
  \|W_{x}\|_{1} = \sum\limits^{d_{1}}_{i=1}\sum\limits^{l}_{j=1}|W_{x}(i,j)|, & \|W_{y}\|_{1} =
  \sum\limits^{d_{2}}_{i=1}\sum\limits^{l}_{j=1}|W_{y}(i,j)|.
\end{eqnarray*}

Since \eqref{SparseCCAWx} and \eqref{SparseCCAWy} (also, \eqref{SparseCCAWx2} and \eqref{SparseCCAWy2})have the same form, all
results holding for one problem can be naturally extended to the other, so we concentrate on \eqref{SparseCCAWx}. Optimization
problem \eqref{SparseCCAWx} reduces to a $\ell_{1}$-regularized minimization problem of the form
\begin{equation}\label{L1problem}
    \min_{x \in \mathbf{R}^{d}}\frac{1}{2}\|Ax-b\|^{2}_{2} + \lambda \|x\|_{1},
\end{equation}
when $l=1$. In the field of compressed sensing, \eqref{L1problem} has been intensively studied as \textit{denoising basis
pursuit} problem, and many efficient approaches have been proposed to solve it, see \cite{Beck09,Figueiredo07,Hale08,YangZ11}. In
this paper we adopt the fixed-point continuation (FPC) method \cite{Hale08,Hale10}, due to its simple implementation and nice
convergence property.

Fixed-point algorithm for \eqref{L1problem} is an iterative method which updates iterates as
\begin{equation}\label{FPitr}
x^{k+1} = \mathcal{S}_{\nu}\left(x^{k} - \tau A^{T}(Ax-b)\right), \text{~with~} \nu = \tau \lambda,
\end{equation}
where $\tau > 0$ denotes the step size, and $\mathcal{S}_{\nu}$ is the soft-thresholding operator defined as
$$
\mathcal{S}_{\nu}(x) = \begin{bmatrix}
                         \mathcal{S}_{\nu}(x_{1}) & \cdots & \mathcal{S}_{\nu}(x_{d}) \\
                       \end{bmatrix}^{T}
$$
with
\begin{equation}\label{shrinkage}
\mathcal{S}_{\nu}(\omega) = \text{sign}(\omega) \max\{|\omega|-\nu, 0\}, ~\omega \in \mathbf{R}.
\end{equation}
$\mathcal{S}_{\nu}(\omega)$ reduces any $\omega$ with magnitude less than $\nu$ to zero, thus reducing the $\ell_{1}$-norm and
introducing sparsity.

The fixed-point algorithm can be naturally extended to solve \eqref{SparseCCAWx}, which yields
\begin{equation}\label{FPWx}
W^{k+1}_{x,i} = \mathcal{S}_{\nu_{x,i}}\left(W^{k}_{x,i} - \tau_{x} X(X^{T}W^{k}_{x,i} - T_{x,i})\right), ~i=1,\cdots,l,
\end{equation}
where $\nu_{x,i} = \tau_{x} \lambda_{x,i}$ with $\tau_{x} > 0$ denoting the step size. We can prove that fixed-point iterations
have some nice convergence properties which are presented in the following theorem.

\begin{theorem}\label{FPConvergence}\cite{Hale08}
Let $\Omega$ be the solution set of \eqref{SparseCCAWx}, then there exists $M^{*} \in \mathbf{R}^{d_{1}\times l}$ such that
\begin{equation}\label{Optimalgradient}
    X(X^{T}W_{x} - T_{x}) \equiv M^{*}, ~\forall W_{x} \in \Omega.
\end{equation}
In addition, define
\begin{equation}\label{IndexSet}
    L:= \{(i,j): |M^{*}_{i,j}| < \lambda_{x}\}                       
\end{equation}
as a subset of indices and let $\lambda_{max}(XX^{T})$ be the maximum eigenvalue of $XX^{T}$, and choose $\tau_{x}$ from
$$
0 < \tau_{x} < \frac{2}{\lambda_{max}(XX^{T})},
$$
then the sequence $\{W^{k}_{x}\}$, generated by the fixed-point iterations \eqref{FPWx} starting with any initial point
$W^{0}_{x}$, converges to some $W^{\ast}_{x} \in \Omega$. Moreover, there exists an integer $K>0$ such that
\begin{equation}\label{FiniteConverge}
    (W^{k}_{x})_{i,j} = (W^{\ast}_{x})_{i,j} = 0, ~\forall (i,j) \in L,
\end{equation}
when $k>K$.
\end{theorem}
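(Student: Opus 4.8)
The plan is to exploit the column-separability of \eqref{SparseCCAWx}: the objective splits as a sum over the columns $W_{x,i}$ of $W_x$, so each column independently solves a problem of the form \eqref{L1problem} with $A = X^{T}$ and penalty $\lambda_{x,i}$. It therefore suffices to prove every assertion for the single-column problem and then reassemble the results column by column. Suppressing the column index, I write $w$, $t$, $\lambda$, with smooth part $g(w) = \tfrac12\|X^{T}w - t\|_2^2$ whose gradient is $\nabla g(w) = X(X^{T}w - t)$ and whose Hessian is $XX^{T}$. First I would prove the constancy statement \eqref{Optimalgradient}. The objective $f = g + \lambda\|\cdot\|_1$ is convex, so its solution set $\Omega$ is convex and $f$ is constant on $\Omega$. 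Along any segment joining two minimizers, the equality case of Jensen's inequality applied to the two convex summands $g$ and $\lambda\|\cdot\|_1$ forces each to be affine on that segment; since $\tilde g(u) = \tfrac12\|u - t\|_2^2$ is strictly convex in $u = X^{T}w$, affinity of $g$ forces $X^{T}w$ to be constant on $\Omega$. Consequently $X(X^{T}w - t)$ is constant on $\Omega$, which defines $M^{*}$.

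Next I would read off the description of $L$ from the subdifferential optimality condition $0 \in M^{*} + \lambda\,\partial\|w\|_1$. Componentwise this gives $|M^{*}_{i}| \leq \lambda$ always, with equality $|M^{*}_{i}| = \lambda$ whenever $w_{i} \neq 0$ (because then $\partial|w_i| = \{\mathrm{sign}(w_i)\}$). Taking the contrapositive, a strict inequality $|M^{*}_{i}| < \lambda$, i.e. $i \in L$, forces $w_i = 0$ for every minimizer $w \in \Omega$.

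The hard part will be the convergence of the iterates \eqref{FPWx}. I would write the iteration as $w^{k+1} = \mathcal{S}_{\nu}\bigl(h(w^{k})\bigr)$ with the affine map $h(w) = w - \tau X(X^{T}w - t) = (I - \tau XX^{T})w + \tau X t$, whose linear part $I - \tau XX^{T}$ has eigenvalues $1 - \tau\mu$ for $\mu \in \mathrm{spec}(XX^{T}) \subseteq [0,\lambda_{\max}(XX^{T})]$. The step-size bound $0 < \tau < 2/\lambda_{\max}(XX^{T})$ yields $|1 - \tau\mu| \leq 1$, and moreover makes $h$ an averaged operator (one checks $\|(I-\tau XX^T) - (1-\alpha)I\| \le \alpha$ for any $\alpha \in [\tau\lambda_{\max}(XX^T)/2,\,1)$). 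Since the shrinkage $\mathcal{S}_{\nu}$ is firmly non-expansive, being the proximity operator of $\lambda\|\cdot\|_1$, the composition $\mathcal{S}_{\nu}\circ h$ is averaged and its fixed points are exactly the points of $\Omega$. The technical crux is precisely here: non-expansiveness alone does not force the Picard iterates to converge, so one must invoke the Krasnoselskii--Mann / Opial convergence theorem for averaged non-expansive operators to conclude that $w^{k}$ converges to some $w^{*} \in \Omega$.

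Finally, the finite-convergence claim \eqref{FiniteConverge} follows by passing to the limit inside the shrinkage. For $i \in L$ the second step gives $w^{*}_{i} = 0$, so the shrinkage argument satisfies $\bigl(w^{k} - \tau X(X^{T}w^{k} - t)\bigr)_{i} \to w^{*}_{i} - \tau M^{*}_{i} = -\tau M^{*}_{i}$, whose magnitude is $\tau|M^{*}_{i}| < \tau\lambda = \nu$. Hence for all sufficiently large $k$ the argument lies in $(-\nu,\nu)$, where $\mathcal{S}_{\nu}$ returns exactly $0$; choosing $K$ uniformly over the finitely many indices in $L$ and over the $l$ columns yields $(W^{k}_{x})_{i,j} = 0$ for all $k > K$, as required.
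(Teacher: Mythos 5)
Your proof is correct, but it cannot be compared against a proof in the paper because the paper gives none: Theorem \ref{FPConvergence} is quoted verbatim with a citation to \cite{Hale08}, and the authors rely entirely on that reference. Your reconstruction recovers the essential ingredients of the cited argument --- the constancy of $X^{T}W_{x}$ (hence of the gradient) on the convex solution set via strict convexity of the quadratic in the residual variable, the subdifferential characterization showing every minimizer vanishes on $L$, and the finite identification of the zero components by passing to the limit inside the shrinkage. Where you genuinely diverge from \cite{Hale08} is the convergence step: the reference proves convergence by a hands-on analysis (nonexpansiveness of $\mathcal{S}_{\nu}\circ h$, finite convergence of signs and of the components in $L$, and then reduction of the iteration to a convergent linear recursion on the identified support), whereas you package the same step-size condition into the statement that $h$ is averaged and invoke the Krasnoselskii--Mann theorem. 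Your route is shorter and cleaner at the cost of importing a nontrivial fixed-point theorem; the reference's route is more elementary and yields extra structural information (e.g.\ the eventual sign pattern). Two small points worth tightening: you should record that $\Omega\neq\emptyset$ (coercivity of the objective, since $\lambda_{x,i}>0$), which is needed both for the constancy argument and for Krasnoselskii--Mann; and note that the paper's definition \eqref{IndexSet} uses a single $\lambda_{x}$ while \eqref{SparseCCAWx} carries column-dependent penalties $\lambda_{x,j}$ --- your per-column treatment with $\lambda_{x,j}$ is the reading that actually makes the statement correct.
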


\begin{remark}
\begin{enumerate}
  \item Equation \eqref{Optimalgradient} shows that for any two optimal solutions of \eqref{SparseCCAWx} the gradient of the squared
Frobenius norm in \eqref{SparseCCAWx} must be equal.
  \item Equation \eqref{FiniteConverge} means that the entries of $W^{k}_{x}$ with indices from $L$ will converge to zero in
finite steps. The positive integer $K$ is a function of $W^{0}_{x}$ and $W^{\ast}_{x}$, and determined by the distance between
them.
\end{enumerate}
\end{remark}

Similarly, we can design a fixed-point algorithm to solve \eqref{SparseCCAWy} as follows:
\begin{equation}\label{FPWy}
W^{k+1}_{y,i} = \mathcal{S}_{\nu_{y,i}}\left(W^{k}_{y,i} - \tau_{y} Y(Y^{T}W^{k}_{y,i} - T_{y,i})\right), \text{~with~} \nu_{y,i}
= \tau_{y} \lambda_{y,i},~i=1,\cdots,l,
\end{equation}
where $\tau_{y} > 0$ denotes the step size.

Now, we are ready to present our sparse CCA algorithm.
\begin{algorithm}[H]
\caption{(SCCA$\_$LS: Sparse CCA based on least squares)} \label{SCCAFPC}
\begin{algorithmic}[1]
\REQUIRE Training data $X \in \mathbf{R}^{d_{1}\times n}$, $Y \in \mathbf{R}^{d_{2}\times n}$                     

\ENSURE Sparse transformation matrices $W_{x}\in \mathbf{R}^{d_1\times l}$ and $W_{y}\in \mathbf{R}^{d_2\times l}$.

\STATE Compute matrix factorizations \eqref{XQR}-\eqref{SVDQ1Q2};

\STATE Compute $T_{x}$ and $T_{y}$ according to \eqref{PridictorX} and \eqref{PridictorY};

\REPEAT
    \STATE $W^{k+1}_{x,i} = \mathcal{S}_{\nu_{x,i}}\left(W^{k}_{x,i} - \tau_{x} X(X^{T}W^{k}_{x,i} - T_{x,i})\right)$,
           $\nu_{x,i}=\tau_{x} \lambda_{x,i}$, $i=1,\cdots,l$,
\UNTIL {convergence}

\REPEAT
    \STATE $W^{k+1}_{y,i} = \mathcal{S}_{\nu_{y,i}}\left(W^{k}_{y,i} - \tau_{y} Y(Y^{T}W^{k}_{y,i} - T_{y,i})\right)$,
           $\nu_{x,i}=\tau_{x} \lambda_{x,i}$, $i=1,\cdots,l$,
\UNTIL {convergence}

\RETURN $W_{x} = W^{k}_{x}$ and $W_{y} = W^{k}_{y}$.
\end{algorithmic}
\end{algorithm}

Although different solutions may be returned by Algorithm \ref{SCCAFPC} starting from different initial points, we can conclude
form \eqref{Optimalgradient} that
$$
XX^{T}W^{\ast}_{x} = XX^{T}\widehat{W}^{\ast}_{x} , ~\forall W^{\ast}_{x}, \widehat{W}^{\ast}_{x} \in \Omega,
$$
which results in $U^{T}_{1}W^{\ast}_{x} = U^{T}_{1}\widehat{W}^{\ast}_{x}$. Similarly, we have $V^{T}_{1}W^{\ast}_{y} =
V^{T}_{1}\widehat{W}^{\ast}_{y}$ for two different solutions of \eqref{SparseCCAWy}. Hence,
\begin{align*}
  (W^{\ast}_{x})^{T}XX^{T}W^{\ast}_{x} &= (\widehat{W}^{\ast}_{x})^{T}XX^{T}\widehat{W}^{\ast}_{x}, \\
  (W^{\ast}_{y})^{T}YY^{T}W^{\ast}_{y} &= (\widehat{W}^{\ast}_{y})^{T}YY^{T}\widehat{W}^{\ast}_{y}, \\
  (W^{\ast}_{x})^{T}XY^{T}W^{\ast}_{y} &= (\widehat{W}^{\ast}_{x})^{T}XY^{T}\widehat{W}^{\ast}_{y}.
\end{align*}
The above equations show that any two optimal solutions of \eqref{SparseCCAWx} approximate the solution of CCA in the same level.

Due to the effect of $\ell_{1}$-norm regularization, a solution $(W^{\ast}_{x}, W^{\ast}_{y})$ does not satisfy the orthogonality
constraints of CCA any more, but we can derive a bound on the deviation. Since \eqref{SparseCCAWx} is a convex optimization
problem, we have
\begin{equation}\label{OptimalCon}
    X(X^{T}W^{\ast}_{x} - T_{x}) + \lambda_{x}\mathcal{G} = 0, \text{~for~some~} \mathcal{G} \in \partial\|W^{\ast}_{x}\|_{1},
\end{equation}
where $\partial\|W^{\ast}_{x}\|_{1}$ denotes the sub-differential of $\|\cdot\|_{1}$ at $W^{\ast}_{x}$. Simplifying
\eqref{OptimalCon}, we can get
$$
U^{T}_{1}W^{\ast}_{x} = \Sigma^{-1}_{1}P_{1}(:,1:l) - \lambda_{x}\Sigma^{-2}_{1}U^{T}_{1}\mathcal{G},
$$
which implies
$$
(W^{\ast}_{x})^{T}XX^{T}W^{\ast}_{x} = I_{l} - \lambda_{x}P_{1}(:,1:l)^{T}\Sigma^{-1}_{1}U^{T}_{1}\mathcal{G} -
\lambda_{x}\mathcal{G}^{T}U_{1}\Sigma^{-1}_{1}P_{1}(:,1:l) +
\lambda^{2}_{x}\mathcal{G}^{T}U_{1}\Sigma^{-2}_{1}U^{T}_{1}\mathcal{G}.
$$
Since $\mathcal{G}\in \mathbf{R}^{d_{1}\times l}$ satisfies $|\mathcal{G}_{i,j}| \leq 1$ for $i=1,\cdots,d_1,~j=1,\cdots,l$, we
further assume there are $N_{x}$ non-zeros in $\mathcal{G}$, it follows that
\begin{align}\label{Orthboundx}
    \frac{\|(W^{\ast}_{x})^{T}XX^{T}W^{\ast}_{x} - I_{l}\|_{F}}{\sqrt{l}}
    & \leq \frac{\lambda_{x}}{\sigma_{r}(X)\sqrt{l}}\left( 2\sqrt{N_{x}}+\frac{\lambda_{x}}{\sigma_{r}(X)}N_{x} \right) \nonumber \\
    & \leq \frac{\lambda_{x}\sqrt{d_{1}}}{\sigma_{r}(X)} \left( 2+\frac{\lambda_{x}}{\sigma_{r}(X)}\sqrt{ld_{1}} \right),
\end{align}
where $\sigma_{r}(X)$ denotes the smallest nonzero singular value of $X$. So the bound is affected by regularization parameter
$\lambda_{x}$, the smallest nonzero singular value of $X$ and the number of non-zeros in $\mathcal{G}$. A Similar result can be
obtained for the optimal solutions of \eqref{SparseCCAWy}.

\section{Extension to kernel canonical correlation analysis}\label{KernelCCA}
\setcounter{equation}{0}

Since kernel CCA criterion \eqref{KCCA} and CCA criterion \eqref{CompactCCA} have the same form, we can expect a similar
characterization of solutions of \eqref{KCCA} as Theorem \ref{GeneralSolutionThm}. Define
$$
\hat{r} = \mbox{rank}(K_{x}), \quad \hat{s} = \mbox{rank}(K_{y}), \quad \hat{m} = \mbox{rank}(K_{x}K^T_{y}),
$$
and let the eigenvalue decomposition of $K_{x}$ and $K_{y}$ be, respectively,
\begin{equation}\label{KxEigenDec}
    K_{x} = \mathcal{U}\begin{bmatrix}
           \Pi_{1} & 0 \\
           0 & 0 \\
         \end{bmatrix} \mathcal{U}^T
      =
        \begin{bmatrix}
          \mathcal{U}_{1} & \mathcal{U}_{2} \\
        \end{bmatrix}
        \begin{bmatrix}
          \Pi_{1} & 0 \\
          0 & 0 \\
        \end{bmatrix}
        \begin{bmatrix}
          \mathcal{U}_{1} & \mathcal{U}_{2} \\
        \end{bmatrix}^{T}
      = \mathcal{U}_{1}\Pi_{1} \mathcal{U}^T_{1},
\end{equation}
and
\begin{equation}\label{KyEigenDec}
    K_{y} = \mathcal{V}\begin{bmatrix}
           \Pi_{2} & 0 \\
           0 & 0 \\
         \end{bmatrix} \mathcal{V}^T
      =
        \begin{bmatrix}
          \mathcal{V}_{1} & \mathcal{V}_{2} \\
        \end{bmatrix}
        \begin{bmatrix}
          \Pi_{2} & 0 \\
          0 & 0 \\
        \end{bmatrix}
        \begin{bmatrix}
          \mathcal{V}_{1} & \mathcal{V}_{2} \\
        \end{bmatrix}^{T}
      = \mathcal{V}_{1}\Pi_{2} \mathcal{V}^T_{1},
\end{equation}
where
$$
\mathcal{U} \in \mathbf{R}^{n\times n}, \ \mathcal{U}_{1} \in \mathbf{R}^{n\times \hat{r}}, \ \mathcal{U}_{2} \in
\mathbf{R}^{n\times (n-\hat{r})}, \ \Pi_{1} \in \mathbf{R}^{\hat{r} \times \hat{r}},
$$
$$
\mathcal{V} \in \mathbf{R}^{n\times n}, \ \mathcal{V}_{1} \in \mathbf{R}^{n\times \hat{s}}, \ \mathcal{V}_{2} \in
\mathbf{R}^{n\times (n-\hat{s})}, \ \Pi_{2} \in \mathbf{R}^{\hat{s} \times \hat{s}},
$$
$\mathcal{U}$ and $\mathcal{V}$ are orthogonal, $\Pi_{1}$ and $\Pi_{2}$ are nonsingular and diagonal. In addition, let
\begin{equation}\label{KxKySVD}
\mathcal{U}^T_{1}\mathcal{V}_{1} = \mathcal{P}_{1} \Pi \mathcal{P}^{T}_{2}
\end{equation}
be the singular value decomposition of $\mathcal{U}^T_{1}\mathcal{V}_{1}$, where $\mathcal{P}_{1} \in \mathbf{R}^{\hat{r}\times
\hat{r}}$ and $\mathcal{P}_{2} \in \mathbf{R}^{\hat{s}\times \hat{s}}$ are orthogonal and $\Pi \in \mathbf{R}^{\hat{r}\times
\hat{s}}$ is a diagonal matrix. Then we can prove for $1 \leq l \leq \min\{\hat{r}, \hat{s}\}$ that
\begin{equation}\label{DualSolution}
    \left\{
       \begin{array}{l}
         \mathcal{W}_{x} = \mathcal{U}_{1}\Pi^{-1}_{1}\mathcal{P}_{1}(:,1:l) + \mathcal{U}_{2}\mathcal{E},\\
         \mathcal{W}_{y} = \mathcal{V}_{1}\Pi^{-1}_{2}\mathcal{P}_{2}(:,1:l) + \mathcal{V}_{2}\mathcal{F},
       \end{array}
    \right.
\end{equation}
with $\mathcal{E} \in \mathbf{R}^{(n-\hat{r})\times l}$ and $\mathcal{F} \in \mathbf{R}^{(n-\hat{s})\times l}$ being arbitrary
matrices, form a subset of solutions to \eqref{KCCA}.

Solutions of \eqref{KCCA} can also be associated with least squares problems. Define
\begin{align}
  \mathcal{T}_{x} &= K_{x}K^{\dagger}_{x}\mathcal{V}_{1}\mathcal{P}_{2}(:,1:l)(\Pi(1:l,1:l))^{-1} = \mathcal{U}_{1}\mathcal{P}_{1}(:,1:l), \label{DualPredictorX} \\
  \mathcal{T}_{y} &= K_{y}K^{\dagger}_{y}\mathcal{U}_{1}\mathcal{P}_{1}(:,1:l)(\Pi(1:l,1:l))^{-1} =
\mathcal{V}_{1}\mathcal{P}_{2}(:,1:l), \label{DualPredictorY}
\end{align}
with $1 \leq l \leq \hat{m}$, then each pair of $\mathcal{W}_{x}$ and $\mathcal{W}_{y}$, satisfying
\begin{equation*}
    \mathcal{W}_{x} = \text{arg}\min\{\|K_{x}\mathcal{W}_{x} - \mathcal{T}_{x}\|^{2}_{F}: \mathcal{W}_{x} \in \mathbf{R}^{n\times
l}\},
\end{equation*}
and
\begin{equation*}
    \mathcal{W}_{y} = \text{arg}\min\{\|K_{y}\mathcal{W}_{y} - \mathcal{T}_{y}\|^{2}_{F}: \mathcal{W}_{y} \in \mathbf{R}^{n\times
l}\},
\end{equation*}
respectively, form a solution of \eqref{KCCA}.

Similar to the derivation of sparse CCA in Section \ref{LSCCA}, we incorporate sparsity into $\mathcal{W}_{x}$ and
$\mathcal{W}_{y}$ through solving the following $\ell_{1}$-norm regularized least squares problems
\begin{eqnarray}
  \min\{\frac{1}{2}\|K_{x}\mathcal{W}_{x} - \mathcal{T}_{x}\|^{2}_{F} + \sum\limits^{l}_{i=1}\rho_{x,i} \|\mathcal{W}_{x,i}\|_{1}:
  \mathcal{W}_{x} \in \mathbf{R}^{n\times l}\}, \label{SparseKCCAx}\\
  \min\{\frac{1}{2}\|K_{y}\mathcal{W}_{y} - \mathcal{T}_{y}\|^{2}_{F} + \sum\limits^{l}_{i=1}\rho_{y,i} \|\mathcal{W}_{y,i}\|_{1}:
  \mathcal{W}_{y} \in \mathbf{R}^{n\times l}\}, \label{SparseKCCAy}
\end{eqnarray}
where $\rho_{x,i},\rho_{y,i} > 0$ are regularization parameters. Applying fixed-point iterative method to \eqref{SparseKCCAx} and
\eqref{SparseKCCAy}, we get a new sparse kernel CCA algorithm presented in Algorithm \ref{SKCCAFPC}

\begin{algorithm}[H]
\caption{(SKCCA: Sparse kernel CCA)} \label{SKCCAFPC}
\begin{algorithmic}[1]
\REQUIRE Training data $X \in \mathbf{R}^{d_{1}\times n}$, $Y \in \mathbf{R}^{d_{2}\times n}$                        

\ENSURE Sparse transformation matrices $\mathcal{W}_{x}\in \mathbf{R}^{n\times l}$ and $\mathcal{W}_{y}\in \mathbf{R}^{n\times
l}$.

\STATE Construct and center kernel matrices $K_{x}$, $K_{y}$;

\STATE Compute matrix factorizations \eqref{KxEigenDec}-\eqref{KxKySVD};

\STATE Compute $\mathcal{T}_{x}$ and $\mathcal{T}_{y}$ defined in \eqref{DualPredictorX}-\eqref{DualPredictorY};

\REPEAT
    \STATE $\mathcal{W}^{k+1}_{x,i} = \mathcal{S}_{\nu_{x,i}}\left(\mathcal{W}^{k}_{x,i} - \tau_{x} K_{x}(K^{T}_{x}\mathcal{W}^{k}_{x,i} -
           \mathcal{T}_{x,i})\right)$, $\nu_{x,i} = \tau_{x}\rho_{x,i}$, $i=1,\cdots,l$,
\UNTIL {convergence}

\REPEAT
    \STATE $\mathcal{W}^{k+1}_{y,i} = \mathcal{S}_{\nu_{y,i}}\left(\mathcal{W}^{k}_{y,i} - \tau_{y} K_{y}(K^{T}_{y}\mathcal{W}^{k}_{y,i} -
           \mathcal{T}_{y,i})\right)$, $\nu_{y,i} = \tau_{y}\rho_{y,i}$, $i=1,\cdots,l$,
\UNTIL {convergence}

\RETURN $\mathcal{W}_{x} = \mathcal{W}^{k}_{x}$ and $\mathcal{W}_{y} = \mathcal{W}^{k}_{y}$.
\end{algorithmic}
\end{algorithm}

Since canonical correlations in kernel CCA depend only on kernel matrices $K_{x}$ and $K_{y}$. Therefore, as we shall see from
factorizations \eqref{KxEigenDec}-\eqref{KxKySVD}, canonical correlations in kernel CCA are determined by singular values of
$\mathcal{U}^T_{1}\mathcal{V}_{1}$. The following lemma reveals a simple result regarding the distribution of canonical
correlations.

\begin{lemma}\label{CanCorr}
Let $\hat{r} = \mbox{rank}(K_{x})$ and $\hat{s} = \mbox{rank}(K_{y})$. If $\hat{r} + \hat{s} = n + \gamma$ for some $\gamma
> 0$, then $\mathcal{U}^T_{1}\mathcal{V}_{1}$ has at least $\gamma$ singular values equal to 1.
\end{lemma}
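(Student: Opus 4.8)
The plan is to interpret the singular values of $\mathcal{U}_1^T\mathcal{V}_1$ geometrically. Since the columns of $\mathcal{U}_1$ and of $\mathcal{V}_1$ form orthonormal bases for the range spaces $\mathcal{R}(K_x)$ and $\mathcal{R}(K_y)$ (of dimensions $\hat r$ and $\hat s$, respectively), the singular values of $\mathcal{U}_1^T\mathcal{V}_1$ are precisely the cosines of the principal angles between these two subspaces, and in particular all lie in $[0,1]$. Under this reading a singular value equal to $1$ corresponds to a principal angle equal to $0$, i.e. to a direction common to both subspaces, so the crux is to show that the number of unit singular values equals $\dim(\mathcal{R}(K_x)\cap\mathcal{R}(K_y))$ and that this dimension is at least $\gamma$.

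First I would establish that the multiplicity of the singular value $1$ equals $\dim(\mathcal{R}(K_x)\cap\mathcal{R}(K_y))$. The cleanest route argues directly with orthogonal projections: because $\mathcal{U}_1$ has orthonormal columns, $\mathcal{U}_1\mathcal{U}_1^T$ is the orthogonal projector onto $\mathcal{R}(K_x)$ and $\|\mathcal{U}_1^T w\|_2 = \|\mathcal{U}_1\mathcal{U}_1^T w\|_2$ for every $w$. If $p$ is a unit right singular vector of $\mathcal{U}_1^T\mathcal{V}_1$ with singular value $1$, then setting $w = \mathcal{V}_1 p$ (a unit vector lying in $\mathcal{R}(K_y)$) gives $\|\mathcal{U}_1^T w\|_2 = 1 = \|w\|_2$, which forces $w \in \mathcal{R}(K_x)$; hence $w \in \mathcal{R}(K_x)\cap\mathcal{R}(K_y)$. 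Conversely, any unit vector in the intersection produces a unit singular value, and since $p \mapsto \mathcal{V}_1 p$ is an isometry, this correspondence matches multiplicities.

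Second, I would carry out a dimension count. Since $\mathcal{R}(K_x)+\mathcal{R}(K_y)\subseteq\mathbf{R}^n$, the Grassmann dimension formula gives
$$\dim(\mathcal{R}(K_x)\cap\mathcal{R}(K_y)) = \hat r + \hat s - \dim\bigl(\mathcal{R}(K_x)+\mathcal{R}(K_y)\bigr) \geq \hat r + \hat s - n = \gamma.$$
Combining this with the first step yields at least $\gamma$ singular values of $\mathcal{U}_1^T\mathcal{V}_1$ equal to $1$, as claimed.

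I expect the main obstacle to be the first step: making the identification between the multiplicity of the unit singular value and the dimension of the subspace intersection fully rigorous rather than merely plausible. The bound $\|\mathcal{U}_1^T w\|_2 \le \|w\|_2$ shows all singular values lie in $[0,1]$, but one must argue that equality is attained on a subspace of dimension \emph{exactly} $\dim(\mathcal{R}(K_x)\cap\mathcal{R}(K_y))$, counted with multiplicity. Ensuring that $p \mapsto \mathcal{V}_1 p$ restricts to an isomorphism between the eigenspace of $(\mathcal{U}_1^T\mathcal{V}_1)^T(\mathcal{U}_1^T\mathcal{V}_1)$ for eigenvalue $1$ and the intersection is where the real work lies; the dimension count in the second step is then immediate.
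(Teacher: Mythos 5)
Your proof is correct, but it travels a more geometric road than the paper does. The paper's argument is a three-line matrix computation: using $\mathcal{U}_{1}\mathcal{U}^{T}_{1}+\mathcal{U}_{2}\mathcal{U}^{T}_{2}=I_{n}$ it writes
$(\mathcal{U}^{T}_{1}\mathcal{V}_{1})^{T}\mathcal{U}^{T}_{1}\mathcal{V}_{1}=I_{\hat{s}}-\mathcal{V}^{T}_{1}\mathcal{U}_{2}\mathcal{U}^{T}_{2}\mathcal{V}_{1}$, observes that the subtracted positive semi-definite matrix has rank at most $\operatorname{rank}(\mathcal{U}^{T}_{2}\mathcal{V}_{1})\leq n-\hat{r}=\hat{s}-\gamma$, and concludes that it has at least $\gamma$ zero eigenvalues, i.e.\ the Gram matrix has at least $\gamma$ unit eigenvalues. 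You instead identify the multiplicity of the unit singular value with $\dim(\mathcal{R}(K_{x})\cap\mathcal{R}(K_{y}))$ via the principal-angle/projector interpretation and then invoke the Grassmann dimension formula. The two arguments count the same object --- the null space of $\mathcal{U}^{T}_{2}\mathcal{V}_{1}$ consists exactly of those $p$ with $\mathcal{V}_{1}p\perp\mathcal{R}(K_{x})^{\perp}$, i.e.\ $\mathcal{V}_{1}p$ in the intersection --- but the paper's version is shorter and needs nothing beyond a rank bound, while yours buys a conceptual explanation (unit canonical correlations come precisely from directions common to the two range spaces) at the cost of having to make the equality case of $\|\mathcal{U}^{T}_{1}w\|_{2}\leq\|w\|_{2}$ and the multiplicity matching rigorous. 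The step you flag as the main obstacle is handled cleanly by noting that, since $(\mathcal{U}^{T}_{1}\mathcal{V}_{1})^{T}(\mathcal{U}^{T}_{1}\mathcal{V}_{1})\preceq I$, the set of $p$ with $\|\mathcal{U}^{T}_{1}\mathcal{V}_{1}p\|_{2}=\|p\|_{2}$ is exactly the eigenspace for eigenvalue $1$, and $\mathcal{V}_{1}$ maps this eigenspace isometrically onto the intersection; so your proof closes with no genuine gap.
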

\begin{proof}
Since $\mathcal{U}_{1} \in \mathbf{R}^{n\times \hat{r}}$, $\mathcal{U}_{2} \in \mathbf{R}^{n\times (n-\hat{r})}$ and
$\mathcal{V}_{1} \in \mathbf{R}^{n\times \hat{s}}$ are column orthogonal and $\mathcal{U}_{1}\mathcal{U}^{T}_{1} +
\mathcal{U}_{2}\mathcal{U}^{T}_{2} = I_{n}$, we have
$$
(\mathcal{U}^T_{1}\mathcal{V}_{1})^{T}\mathcal{U}^T_{1}\mathcal{V}_{1} =
\mathcal{V}^T_{1}\mathcal{U}_{1}\mathcal{U}^T_{1}\mathcal{V}_{1} = I_{\hat{s}} -
\mathcal{V}^T_{1}\mathcal{U}_{2}\mathcal{U}^T_{2}\mathcal{V}_{1}.
$$
If there exist $\gamma > 0$ such that $\hat{r} + \hat{s} = n + \gamma$, then $n-\hat{r} = \hat{s}-\gamma < \hat{s}$ and
$$
\text{rank}(\mathcal{V}^T_{1}\mathcal{U}_{2}\mathcal{U}^T_{2}\mathcal{V}_{1}) = \text{rank}(\mathcal{U}^T_{2}\mathcal{V}_{1})
\leq n-\hat{r},
$$
which implies $\mathcal{V}^T_{1}\mathcal{U}_{2}\mathcal{U}^T_{2}\mathcal{V}_{1}$ has at least $\hat{s}-(n-\hat{r})=\gamma$ zero
eigenvalues. Thus, $(\mathcal{U}^T_{1}\mathcal{V}_{1})^{T}\mathcal{U}^T_{1}\mathcal{V}_{1}$ has at least $\gamma$ eigenvalues
equal to 1, which further implies that $\mathcal{U}^T_{1}\mathcal{V}_{1}$ has at least $\gamma$ singular values equal to 1.
\end{proof}

A direct result of lemma \ref{CanCorr} is that there are at least $\gamma$ canonical correlations in kernel CCA are 1. In kernel
methods, due to nonlinearity of kernel functions the rank of kernel matrices is very close to $n$, which makes most canonical
correlations to be 1. For example, polynomial kernel and Gaussian kernel
\begin{equation}\label{Polykernel}
    \kappa(x,y)=(\gamma_{1}(x\cdot y)+\gamma_{2})^{d},~d>0, \text{~and~}\gamma_{1},\gamma_{2}\in \mathbf{R},
\end{equation}
\begin{equation}\label{Gausskernel}
    \kappa(x,y)=\text{exp}\left(-\frac{1}{2\sigma^{2}}\|x-y\|^{2}\right),
\end{equation}
are two widely used kernel functions. For Gaussian kernel we can prove that if $\sigma \neq 0$, then the kernel matrix $K_{x}$
given by
$$
(K_{x})_{ij}=\text{exp}\left(-\frac{1}{2\sigma^{2}}\|x_{i}-x_{j}\|^{2}\right)
$$
has full rank, given the points $\{x_{i}\}^{n}_{i=1}$ are distinct. A similar result can be proven for linear kernel
\begin{equation}\label{Linearkernel}
    \kappa(x,y)=x\cdot y,
\end{equation}
which is a special case of polynomial kernel \eqref{Polykernel}, when $\{x_{i}\}^{n}_{i=1}$ and $\{y_{i}\}^{n}_{i=1}$ are
linearly independent, respectively. Thus, in kernel methods we usually have
$$
\hat{r} = \text{rank}(K_{x}) = n-1, \quad \hat{s} = \text{rank}(K_{y}) = n-1,
$$
after centering data. Since $K_{x}e = 0$ and $K_{x}e = 0$, we see that $\mathcal{U}^{T}_{1}e=\mathcal{V}^{T}_{1}e=0$ and both
$\begin{bmatrix}
    \mathcal{U}_{1} & \frac{e}{\sqrt{n}} \\
 \end{bmatrix}$ and
$\begin{bmatrix}
    \mathcal{V}_{1} & \frac{e}{\sqrt{n}} \\
 \end{bmatrix}$
are orthogonal matrices. This implies that
$$
\begin{bmatrix}
    \mathcal{U}_{1} & \frac{e}{\sqrt{n}} \\
 \end{bmatrix}^{T}
\begin{bmatrix}
    \mathcal{V}_{1} & \frac{e}{\sqrt{n}} \\
 \end{bmatrix}=
\begin{bmatrix}
    \mathcal{U}^{T}_{1}\mathcal{V}_{1} & 0 \\
     0 & 1
 \end{bmatrix}
$$
is orthogonal. In this case, all nonzero canonical correlations determined by the singular values of
$\mathcal{U}^{T}_{1}\mathcal{V}_{1}$ are equal to 1. Therefore, ordinary kernel CCA fails to provide a useful estimation of
canonical correlations for general kernels, because for any distinct sample $\{x_{i}\}^{n}_{i=1}$ of variable $x$ and distinct
sample $\{y_{i}\}^{n}_{i=1}$ of variable $y$ the canonical correlations returned by kernel CCA will be 1 even though variables
$x$ and $y$ have no joint information.

To avoid aforementioned data overfitting problem in kernel CCA, researchers suggested to design a regularized kernelization of CCA
\cite{Bach03,Bie05,Fukumizu07,Hardoon04,Kuss03}. One way of regularization is to penalize weight vectors $w_{x}$ and $w_{y}$,
leading to
$$
\eta = \max_{\alpha,\beta} \frac{\alpha^{T}K_{x}K_{y}\beta}{\sqrt{(\alpha^{T}K^{2}_{x}\alpha +
\rho_{x}\alpha^{T}K_{x}\alpha)(\beta^{T}K^{2}_{y}\beta + \rho_{y}\beta^{T}K_{y}\beta)}}.
$$
As shown in \cite{Bie05}, dual vectors $\alpha$ and $\beta$ solving the above optimization problem form an eigenvector of
\begin{equation}\label{RKCCA}
\begin{bmatrix}
  0 & K_{x}K_{y} \\
  K_{y}K_{x} & 0 \\
\end{bmatrix}\begin{bmatrix}
               \alpha \\
               \beta \\
             \end{bmatrix}=\eta
\begin{bmatrix}
  K^{2}_{x}+\rho_{x}K_{x} & 0 \\
  0 & K^{2}_{y}+\rho_{y}K_{y} \\
\end{bmatrix}\begin{bmatrix}
               \alpha \\
               \beta \\
             \end{bmatrix}
\end{equation}
corresponding to the largest eigenvalue. Dual transformation matrices $\mathcal{W}_{x}, ~\mathcal{W}_{y} \in \mathbf{R}^{n\times
l}$ can be obtained by computing eigenvectors corresponding to $l$ leading eigenvalues of \eqref{RKCCA}. If we have the following
SVD
$$
(\Pi_{1}+\rho_{x}I)^{-1/2}\Pi^{1/2}_{1}\mathcal{U}^{T}_{1}\mathcal{V}_{1}\Pi^{1/2}_{2}(\Pi_{2}+\rho_{y}I)^{-1/2} =
\mathcal{Q}_{1}\widetilde{\Pi}\mathcal{Q}^{T}_{2},
$$
where $\mathcal{Q}_{1} \in \mathbf{R}^{\hat{r}\times\hat{r}}$ and $\mathcal{Q}_{2} \in \mathbf{R}^{\hat{s}\times\hat{s}}$ are
orthogonal, then we can use
\begin{equation}\label{RKCCAsolution}
   \left\{
   \begin{array}{l}
    \mathcal{W}_{x} =\mathcal{U}_{1}(\Pi^{2}_{1}+\rho_{x}\Pi_{1})^{-1/2}\mathcal{Q}_{1}(:,1:l),\\
    \mathcal{W}_{y} =\mathcal{V}_{1}(\Pi^{2}_{2}+\rho_{y}\Pi_{2})^{-1/2}\mathcal{Q}_{2}(:,1:l),
   \end{array}\right. \quad
   1\leq l \leq \hat{m},
\end{equation}
as a solution of regularized kernel CCA.

As shown in \cite{Tibshirani96}, the $\ell_{1}$-penalization term can alleviate data overfitting problem while at the same time
introduce sparsity. We can expect that sparse kernel CCA \eqref{SparseKCCAx}-\eqref{SparseKCCAy} enjoys the properties of both
computing sparse $\mathcal{W}_{x}$ and $\mathcal{W}_{y}$ and avoiding data overfitting similar to regularized kernel CCA.

\section{Numerical results}\label{experiments}
\setcounter{equation}{0}

In this section, we implement the proposed sparse CCA and sparse kernel CCA algorithms, referred to as SCCA$\_$LS and SKCCA,
respectively, on both artificial and real data. In section 5.1, we describe experimental settings, including stoping criteria of
our algorithms and regularization parameter choices. In section 5.2, we apply SCCA$\_$LS to dimension reduction and
classification, and compare it with ordinary CCA and SCCA$\_\ell_{1}$ \cite{ChuLNZ13}---a sparse CCA algorithm based on
$\ell_{1}$-minimization. In section 5.3, we apply both ordinary CCA and kernel CCA (KCCA) to artificial data, which illustrates
the advantage of kernel CCA over ordinary CCA in finding nonlinear relationship. In section 5.4, we compare SKCCA with KCCA and
regularized KCCA (RKCCA) in cross-language documents retrieval task. In section 5.5, we compare SKCCA with KCCA and regularized
KCCA (RKCCA) in content based image retrieval. All experiments were performed under CentOS 5.2 and \texttt{MATLAB} v7.4 (R2007a)
running on a IBM HS21XM Bladeserver with two Intel Xeon E5450 3.0GHz quad-core Harpertown CPUs and 16GB of Random-access memory
(RAM).

\subsection{Experimental settings}
In the implementation of SCCA$\_$LS and SKCCA, we need to determine regularization parameters $\{\lambda_{x,i}\}$ and
$\{\lambda_{y,i}\}$ for SCCA$\_$LS, and $\{\rho_{x,i}\}$ and $\{\rho_{y,i}\}$ for SKCCA. Whe applying SCCA$\_$LS to dimension
reduction and classification in section 5.2, we let
$$\lambda_{x,i}=\lambda_{y,i}=\lambda,~i=1,\cdots,l,$$
for the sake of simplicity. The 5-fold cross-validation was used to choose the optimal $\lambda$ from the candidate set
$\{10^{-4}, 10^{-3}, 10^{-2}, 10^{-1}\}$. When implementing SKCCA in sections 5.4 and 5.5, we selected parameters
$\{\rho_{x,i}\}$ and $\{\rho_{y,i}\}$ in a more subtle way. Since we know that $x^{\ast}$ is a solution of denoising basis
pursuit problem \eqref{L1problem} if and only if
$$
0 \in A^{T}(Ax^{\ast}-b) + \lambda \partial\|x^{\ast}\|_{1},
$$
which implies that $x=0$ is a solution of \eqref{L1problem} when $\lambda \geq \|A^{T}b\|_{\infty}$. To avoid zero solution,
which is meaningless in practice, we chose
$$
\rho_{x,i} = \gamma_{x}\|K^{T}_{x}\mathcal{T}_{x,i}\|_{\infty}, \quad \rho_{y,i} =
\gamma_{y}\|K^{T}_{y}\mathcal{T}_{y,i}\|_{\infty}, \quad i=1,\cdots,l,
$$
where $0 < \gamma_{x} <1$, and $0 < \gamma_{y} <1$.

To perform fixed-point iteration, we use FPC$\_$BB\footnote{\url{http://www.caam.rice.edu/~optimization/L1/fpc/}} algorithm with
\texttt{xtol}=$10^{-5}$ and \texttt{mxitr}=$10^{4}$ and all other parameters default.

In the implementation of RKCCA \eqref{RKCCAsolution}, we also use 5-fold cross-validation to choose an optimal regularization
parameter $\rho_{x}=\rho_{y}=\rho$ from the candidate set $\{10^{-4}, 10^{-3},\cdots, 10^{3}, 10^{4}\}$.

\subsection{Sparse CCA for dimension reduction and classification}

From Section \ref{background} we know that ULDA can be considered as a special case of CCA. In this section, we evaluate the
classification performance of Algorithm \ref{SCCAFPC} on datasets from face image, micro-array and text document databases. Table
\ref{Datasets} describes detailed information of the datasets in our experiments.~\footnote{Gene expression datasets are obtained from site \url{http://stat.ethz.ch/~dettling/bagboost.html}. Their preprocessing is fully described in \cite{Dettling04}. MEDLINE can be downloaded from \url{http://www-users.cs.umn.edu/~hpark/data.html}, and all other text document datasets are downloaded from CLUTO at \url{http://glaros.dtc.umn.edu/gkhome/cluto/cluto/download}. The UMIST face data is available at \url{http://www.sheffield.ac.uk/eee/research/iel/research/face}. The YaleB database is the extended Yale Face Database B~\cite{LeeHK05}. The ORL database can be retrieved from \url{http://www.cl.cam.ac.uk/Research/DTG/attarchive:pub/data/attfaces.tar.Z}. The Essex database is available at \url{http://cswww.essex.ac.uk/mv/allfaces/index.html}. The Palmprint database is available at \url{http://www4.comp.polyu.edu.hk/~biometrics/}. The AR face database is available from \url{http://www2.ece.ohio-state.edu/~aleix/ARdatabase.html}. The Feret face database is available at \url{http://www.itl.nist.gov/iad/humanid/feret/feret_master.html}.}
%

\begin{table}[H]
  \centering{\footnotesize
  \caption{Summary of datasets}\label{Datasets}
  \begin{tabular}{c|c|c|c|c|c}
    \hline
      & dataset & Dimension & Training & Number of classes & Testing \\
    \hline
    \multirow{6}{*}{Gene} & Colon & 2000 & 31 & 2 & 31 \\
     & Leukemia & 3571 & 37 & 2 & 35 \\
     & Prostate & 6033 & 51 & 2 & 51 \\
     & Lymphoma & 4026 & 32 & 3 & 30 \\
     & Srbct & 2307 & 32 & 4 & 31 \\
     & Brain & 5597 & 21 & 5 & 21 \\
    \hline
    \multirow{5}{*}{Text} & MEDLINE & 22095 & 1250 & 5 & 1250 \\
     & Tr23 & 5832 & 104 & 6 & 100 \\
     & Tr41 & 7454 & 442 & 10 & 436 \\
     & Wap & 8460 & 786 & 20 & 774 \\
     & Kla & 21839 & 1173 & 20 & 1167 \\
    \hline
    \multirow{7}{*}{Image} & UMIST & 10304 & 290 & 20 & 285 \\
     & Yale B & 32256 & 1216 & 38 & 1216 \\
     & ORL & 4096 & 200 & 40 & 200 \\
     & Essex & 23800 & 720 & 72 & 720 \\
     & Palmprint & 4096 & 300 & 100 & 300 \\
     & AR & 2250 & 840 & 120 & 840 \\
     & Feret & 6400 & 600 & 200 & 600 \\
    \hline
  \end{tabular}}
\end{table}

To evaluate more comprehensively the efficiency of SCCA$\_$LS, we compared it with ordinary CCA and a recently proposed sparse
CCA algorithm SCCA$\_\ell_{1}$ \cite{ChuLNZ13}, where linearized Bregman method was replaced with an accelerated linearized Bregman
method. When using SCCA$\_\ell_{1}$, we set $\mu_{x}=10$, $\mu_{y}=100$, $\delta=0.9$ and terminated the iteration with tolerance
$\epsilon=10^{-5}$. In Table \ref{Classification}, we recorded the classification accuracy of these three algorithms. The
classification accuracy was computed by employing the K-Nearest-Neighbor (KNN) classifier with $K=1$ in all cases. We also
recorded sparsity of $W_{x}$ showing the ratio of the number of zeros to the number of entries in $W_{x}$, violation of the
orthogonality constraint measured by $Err(W_{x}):=\frac{\|W^{T}_{x}XX^{T}W_{x}-I_{l}\|_{F}}{\sqrt{l}}$, the regularization
parameter $\lambda$ obtained by cross-validation, the number of columns in $W_{x}$ (i.e., $l$) and CPU time in seconds.

\begin{table}[!t]
  \centering{\footnotesize
  \caption{Sparse CCA for classification}\label{Classification}
  \begin{tabular}{c|c|c|c|c|c|c|c}
    \toprule
    Data & Algorithms & Accuracy (\%) & Sparsity (\%) & $Err(W_{x})$ & $\lambda$ & $l$ & CPU time (s)\\
    \hline\hline
    \multirow{3}{*}{Leukemia} & SCCA$\_$LS & 94.286 & 98.4 & 6.770e-5 & $10^{-4}$ & 1 & 7.83 \\
    \cline{2-8}
     & SCCA$\_\ell_{1}$ & 94.286 & 99.0 & 2.719e-8 & - & 1 & 1.35e+2 \\
    \cline{2-8}
     & CCA & 97.143 & 0 & 8.882e-16 & - & 1 & 0.03 \\
    \hline
    \multirow{3}{*}{Colon} & SCCA$\_$LS & 74.194 & 95.5 & 1.173e-4 & $10^{-4}$ & 1 & 0.35 \\
    \cline{2-8}
     & SCCA$\_\ell_{1}$ & 74.194 & 98.5 & 1.123e-5 & - & 1 & 11.32 \\
    \cline{2-8}
     & CCA & 67.742 & 0 & 4.441e-16 & - & 1 & 0.04 \\
    \hline
    \multirow{3}{*}{Prostate} & SCCA$\_$LS & 94.118 & 96.7 & 1.076e-4 & $10^{-4}$ & 1 & 17.23 \\
    \cline{2-8}
     & SCCA$\_\ell_{1}$ & 94.118 & 99.2 & 2.663e-6 & - & 1 & 85.45 \\
    \cline{2-8}
     & CCA & 92.157 & 0 & 1.332e-15 & - & 1 & 0.08 \\
    \hline
    \multirow{3}{*}{Lymphoma} & SCCA$\_$LS & 100 & 97.3 & 6.223e-5 & $10^{-4}$ & 2 & 18.00 \\
    \cline{2-8}
     & SCCA$\_\ell_{1}$ & 100 & 99.2 & 3.069e-6 & - & 2 & 1.03e+2 \\
    \cline{2-8}
     & CCA & 100 & 0 & 4.906e-15 & - & 2 & 0.04 \\
    \hline
    \multirow{3}{*}{Srbct} & SCCA$\_$LS & 93.548 & 96.4 & 4.930e-5 & $10^{-4}$ & 3 & 0.86 \\
    \cline{2-8}
     & SCCA$\_\ell_{1}$ & 96.774 & 98.7 & 3.588e-6 & - & 3 & 5.78e+2 \\
    \cline{2-8}
     & CCA & 96.774 & 0 & 1.929e-15 & - & 3 & 0.03 \\
    \hline
    \multirow{3}{*}{Brain} & SCCA$\_$LS & 76.191 & 99.3 & 7.725e-3 & $10^{-2}$ & 4 & 1.16 \\
    \cline{2-8}
     & SCCA$\_\ell_{1}$ & 76.191 & 99.6 & 9.091e-6 & - & 4 & 2.44e+2 \\
    \cline{2-8}
     & CCA & 76.191 & 0 & 1.485e-15 & - & 4 & 0.03 \\
    \hline\hline
    \multirow{3}{*}{MEDLINE} & SCCA$\_$LS & 92.480 & 99.9 & 0.747 & $10^{-1}$ & 4 & 3.31e+2 \\
    \cline{2-8}
     & SCCA$\_\ell_{1}$ & 72.560 & 94.1 & 1.402e-6 & - & 4 & 1.634e+4 \\
    \cline{2-8}
     & CCA & 74.080 & 31.2 & 2.800e-15 & - & 4 & 1.26e+2 \\
    \hline
    \multirow{3}{*}{Tr23} & SCCA$\_$LS & 77.000 & 99.1 & 0.479 & $10^{-3}$ & 5 & 1.45e+2 \\
    \cline{2-8}
     & SCCA$\_\ell_{1}$ & 71.000 & 98.2 & 2.240e-5 & - & 5 & 3.06e+3 \\
    \cline{2-8}
     & CCA & 73.000 & 1.0 & 3.959e-15 & - & 5 & 0.27 \\
    \hline
    \multirow{3}{*}{Tr41} & SCCA$\_$LS & 89.450 & 95.9 & 0.139 & $10^{-3}$ & 9 & 2.64e+3 \\
    \cline{2-8}
     & SCCA$\_\ell_{1}$ & 85.551 & 94.1 & 8.959e-6 & - & 9 & 1.33e+4 \\
    \cline{2-8}
     & CCA & 88.991 & 3.5 & 4.074e-15 & - & 9 & 5.36 \\
    \hline
    \multirow{3}{*}{Wap} & SCCA$\_$LS & 80.362 & 94.4 & 0.267 & $10^{-1}$ & 19 & 9.10e+2 \\
    \cline{2-8}
     & SCCA$\_\ell_{1}$ & 75.840 & 90.7 & 5.866e-6 & - & 19 & 3.75e+4 \\
    \cline{2-8}
     & CCA & 77.390 & 2.9 & 5.189e-15 & - & 19 & 28.93 \\
    \hline
    \multirow{3}{*}{Kla} & SCCA$\_$LS & 84.576 & 97.3 & 0.284 & $10^{-1}$ & 19 & 3.28e+3 \\
    \cline{2-8}
     & SCCA$\_\ell_{1}$ & 79.520 & 94.6 & 6.729e-6 & - & 19 & 1.37e+5 \\
    \cline{2-8}
     & CCA & 81.405 & 22.1 & 1.633e-14 & - & 19 & 1.27e+2 \\
    \hline\hline
    \multirow{3}{*}{UMIST} & SCCA$\_$LS & 98.246 & 99.1 & 0.511 & $10^{-1}$ & 19 & 6.71e+2 \\
    \cline{2-8}
     & SCCA$\_\ell_{1}$ & 94.386 & 97.2 & 1.880e-5 & - & 19 & 1.64e+4 \\
    \cline{2-8}
     & CCA & 96.842 & 0 & 5.124e-15 & - & 19 & 4.21 \\
    \hline
    \multirow{3}{*}{YaleB} & SCCA$\_$LS & 95.559 & 95.9 & 0.524 & $10^{-1}$ & 37 & 6.37e+3 \\
    \cline{2-8}
     & SCCA$\_\ell_{1}$ & 83.388 & 96.2 & 1.645e-4 & - & 37 & 7.52e+4 \\
    \cline{2-8}
     & CCA & 75.411 & 0 & 2.167e-14 & - & 37 & 1.80e+2 \\
    \hline
    \multirow{3}{*}{ORL} & SCCA$\_$LS & 93.000 & 84.0 & 4.150e-2 & $10^{-4}$ & 39 & 5.83e+4 \\
    \cline{2-8}
     & SCCA$\_\ell_{1}$ & 93.000 & 95.1 & 2.925e-5 & - & 39 & 7.52e+4 \\
    \cline{2-8}
     & CCA & 94.500 & 0 & 6.083e-15 & - & 39 & 1.05 \\
    \hline
    \multirow{3}{*}{Essex} & SCCA$\_$LS & 77.500 & 98.8 & 0.705 & $10^{-1}$ & 71 & 1.01e+4 \\
    \cline{2-8}
     & SCCA$\_\ell_{1}$ & 63.194 & 96.9 & 3.275e-5 & - & 71 & 2.51e+5 \\
    \cline{2-8}
     & CCA & 67.361 & 0 & 7.057e-15 & - & 71 & 58.19 \\
    \hline
    \multirow{3}{*}{Palmprint} & SCCA$\_$LS & 99.000 & 95.5 & 0.404 & $10^{-1}$ & 99 & 3.53e+3 \\
    \cline{2-8}
     & SCCA$\_\ell_{1}$ & 99.000 & 92.7 & 1.801e-5 & - & 99 & 1.22e+4 \\
    \cline{2-8}
     & CCA & 99.333 & 0 & 7.728e-15 & - & 99 & 1.52 \\
    \hline
    \multirow{3}{*}{AR} & SCCA$\_$LS & 88.571 & 91.4 & 0.782 & $10^{-1}$ & 119 & 1.48e+3 \\
    \cline{2-8}
     & SCCA$\_\ell_{1}$ & 77.500 & 62.6 & 5.129e-5 & - & 119 & 1.10e+4 \\
    \cline{2-8}
     & CCA & 81.310 & 0 & 8.550e-15 & - & 119 & 18.03 \\
    \hline
    \multirow{3}{*}{Feret} & SCCA$\_$LS & 70.500 & 94.8 & 0.662 & $10^{-1}$ & 199 & 1.56e+3 \\
    \cline{2-8}
     & SCCA$\_\ell_{1}$ & 62.500 & 90.6 & 3.390e-5 & - & 199 & 5.98e+4 \\
    \cline{2-8}
     & CCA & 70.750 & 0 & 8.86e-15 & - & 199 & 9.66 \\
    \bottomrule
  \end{tabular}}
\end{table}

\subsection{Synthetic data}

In this section, we apply ordinary CCA, RKCCA and SKCCA on synthetic data to demonstrate the ability of kernel CCA in finding
nonlinear relationship. Let $Z$ be a random variable following uniform distribution over interval $(-2,2)$, we sampled 500 pairs
of $(X,Y)$ in the following way:
$$
X=[Z;Z] \quad \text{and} \quad Y=[Z^{2} + 0.3\epsilon_{1};\sin(\pi Z) + 0.3\epsilon_{2}],
$$
where $\epsilon_{1}$ and $\epsilon_{2}$ follow standard normal distribution. Obviously, variables $X$ and $Y$ are nonlinearly
related. We attempt to reveal the nonlinear association using the first pair of canonical variables $w^{T}_{x}X$ and $w^{T}_{y}Y$,
and we also plot $(w^{T}_{x}X, w^{T}_{y}Y)$ in \figurename~\ref{simulation}. When implementing kernel CCA we employed the
Gaussian kernel \eqref{Gausskernel} with $\sigma$ equal to the maximum distance between data points. We set regularization
parameters $\rho_{x}=\rho_{y}=10^{-2}$ in RKCCA and $\rho_{x}=\rho_{y}=10^{-1}$ in SKCCA.

\subfiglabelskip=0pt
\begin{figure}[!htbp]
\centering %
\subfigure[][]{\includegraphics[width=0.3\textwidth]{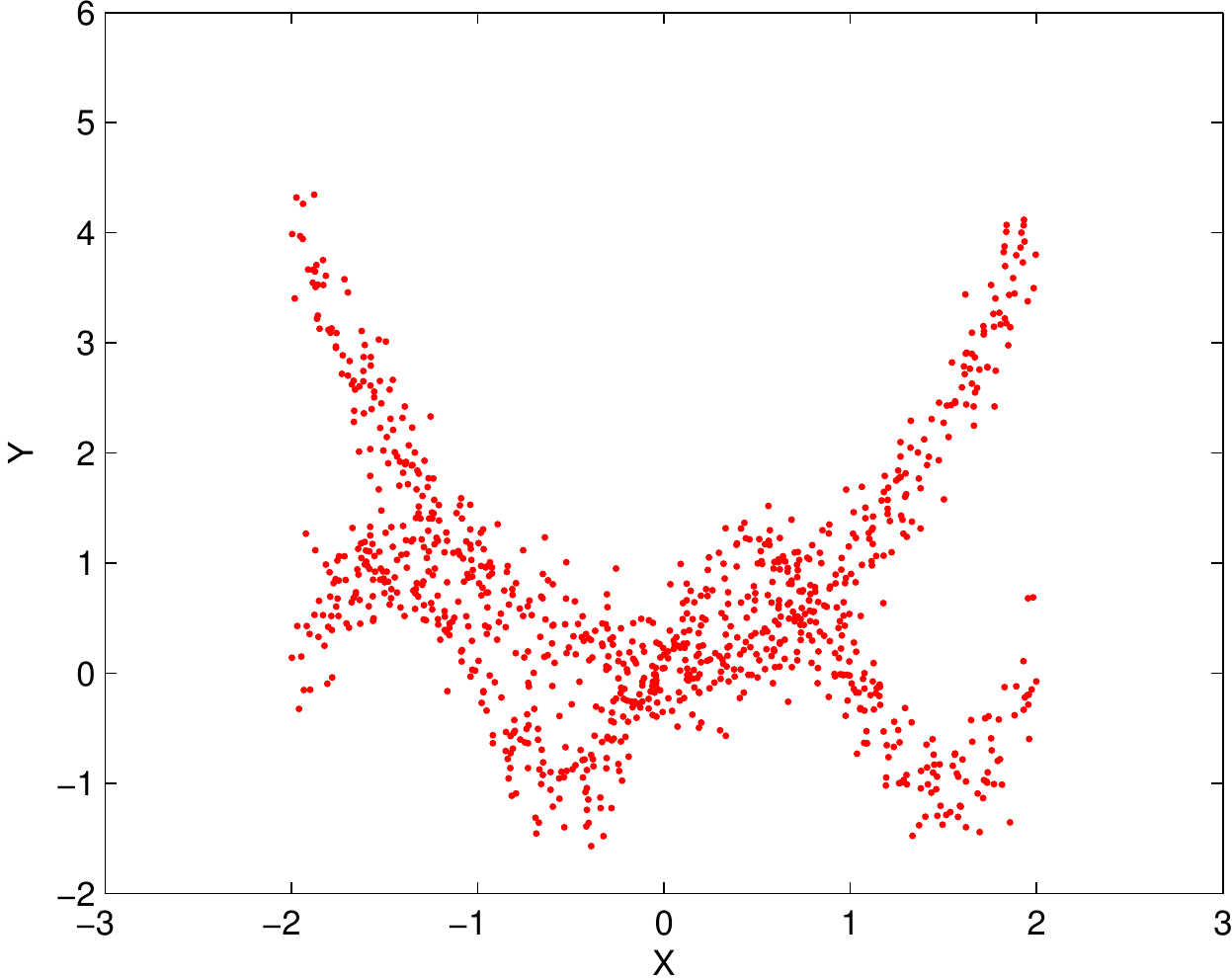} \label{SimData}} \hspace{1pt}
\subfigure[][]{\includegraphics[width=0.3\textwidth]{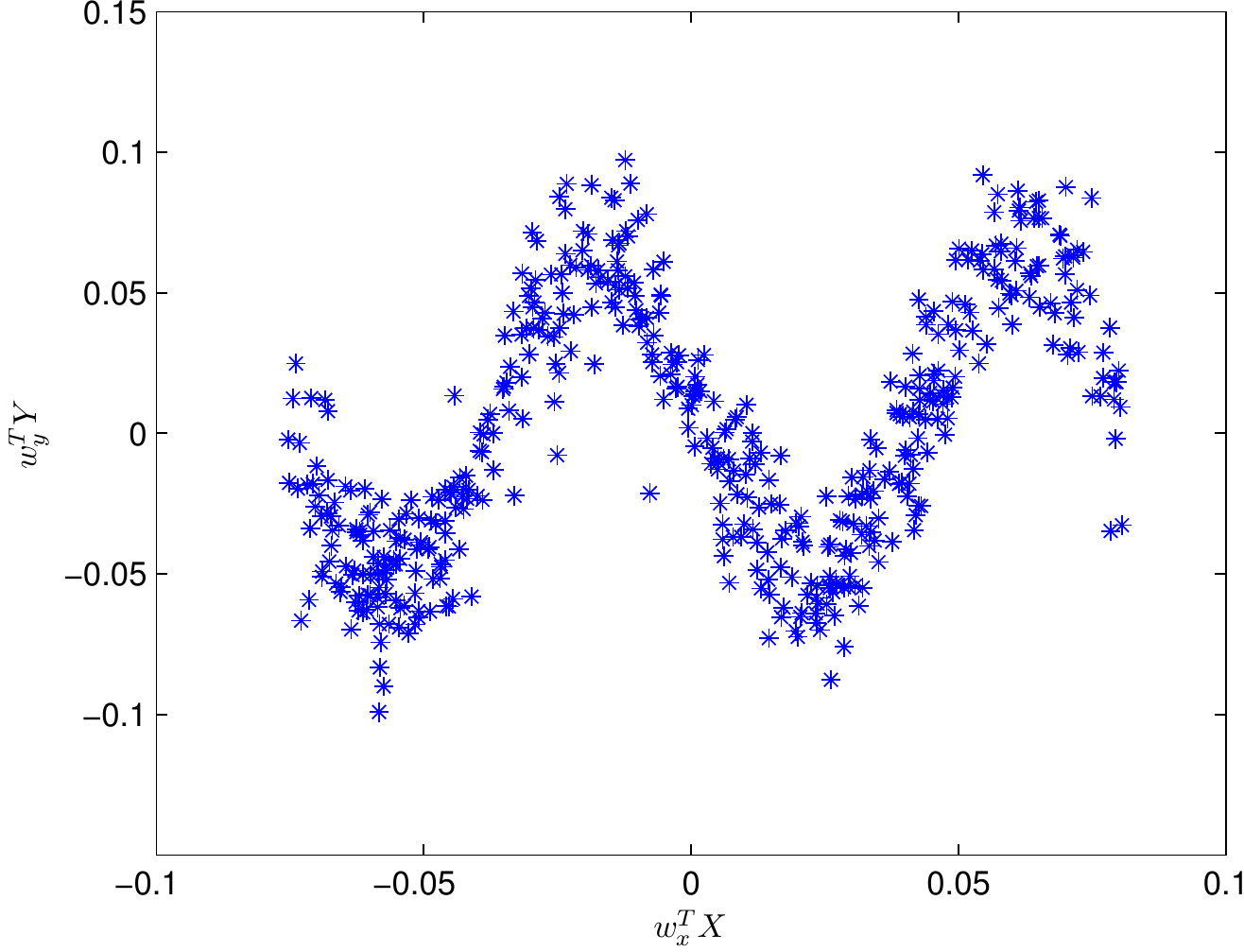} \label{SimCCA}} \\ %
\subfigure[][]{\includegraphics[width=0.3\textwidth]{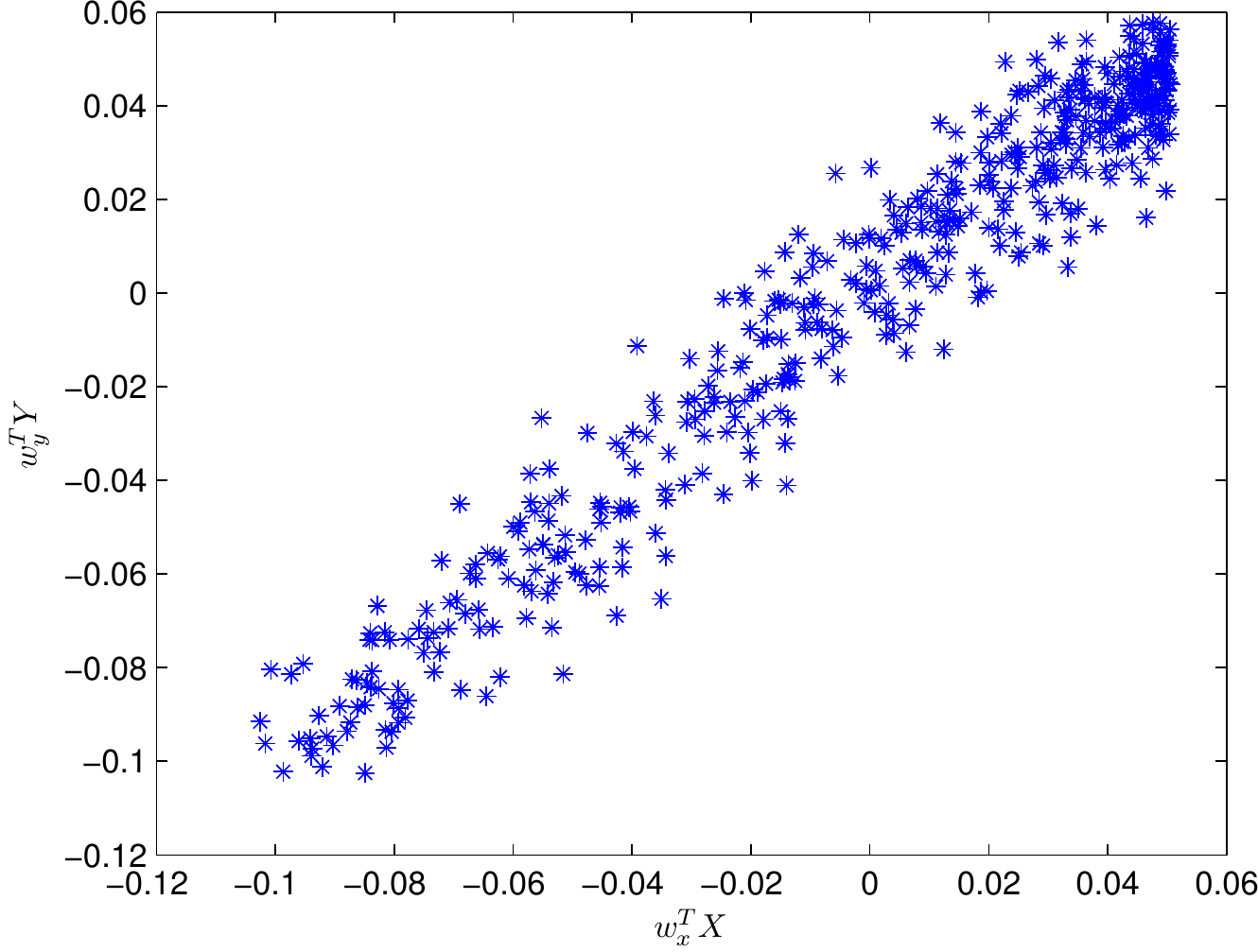} \label{SimRKCCA}} \hspace{1pt}
\subfigure[][]{\includegraphics[width=0.3\textwidth]{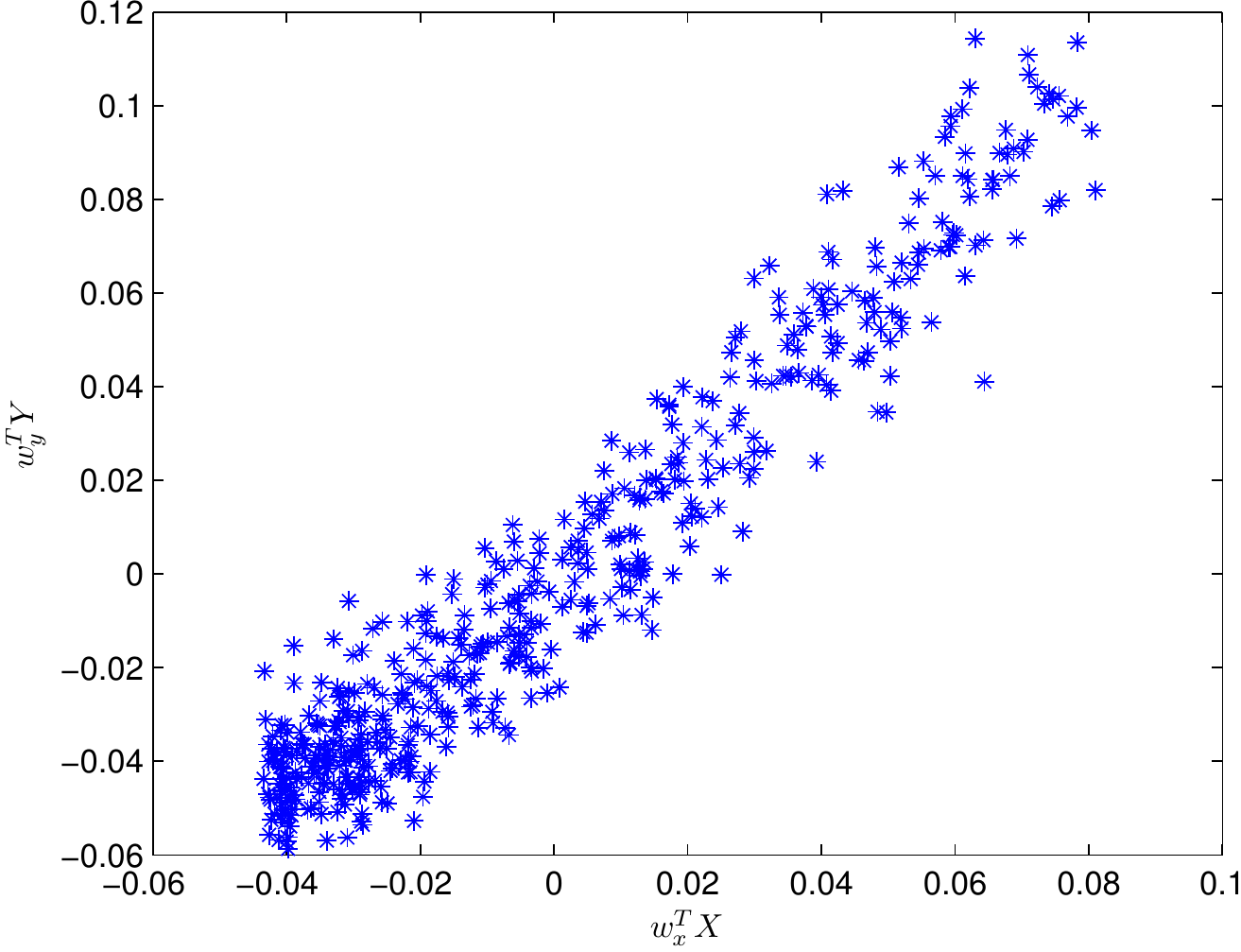} \label{SimSKCCA}} \\ %
\caption{Plots of the first pair of canonical variables:\subref{SimData} sample data, \subref{SimCCA} ordinary CCA, %
\subref{SimRKCCA} RKCCA, \subref{SimSKCCA} SKCCA.}%
\label{simulation}%
\end{figure}

The canonical correlation between the first pair of canonical variables are listed in \tablename~\ref{SimCorr}.

\begin{table}[H]
  \centering
  \begin{tabular}{c|c|c|c}
    \hline
     & CCA & RKCCA & SKCCA \\
    \hline
    canonical correlation & 0.3971 & 0.9621 & 0.9632 \\
    \hline
  \end{tabular}
  \caption{Correlation between the first pair of canonical variables found by ordinary CCA, RKCCA and SKCCA.}\label{SimCorr}
\end{table}

\figurename~\ref{simulation}\subref{SimCCA} shows the data scatter of the first pair of canonical variables found by ordinary CCA,
from which we see that some strong relationship is left unexplained. In contrast, \figurename~\ref{simulation}\subref{SimRKCCA}
and \figurename~\ref{simulation}\subref{SimSKCCA} show the data scatter of the first pair of canonical variables found by RKCCA
and SKCCA, respectively. A clear linear relationship between $w^{T}_{x}X$ and $w^{T}_{y}Y$ can be observed.
\tablename~\ref{SimCorr} also shows that the canonical correlations obtained by RKCCA and SKCCA are 0.9621 and 0.9632,
respectively, which are larger than that achieved by ordinary CCA. The comparison implies that ordinary CCA may not be applicable
to find nonlinear relation of two sets of data.

\subsection{Cross-language document retrieval}

Previous study \cite{Vinokourov03} has shown that kernel CCA works well for cross-language document retrieval and performs better
than the latent semantic indexing approach. In this section, we apply SKCCA to the task of cross-language document retrieval, and
present comparison results of SKCCA with KCCA and RKCCA.

In this experiment, we used the following two datasets:
\begin{enumerate}
  \item The English-French corpus from the Europarl parallel corpus dataset
\cite{Koehn05}\footnote{\url{http://www.statmt.org/europarl/}}, where we obtained 202 samples and generated a $23308\times 202$
term-document matrix for English corpus and a $33986\times 202$ term-document matrix for French corpus.
  \item The Aligned Hansards of the 36th Parliament of Canada \cite{Germann01}\footnote{\url{http://www.isi.edu/natural-language/download/hansard/}},
which is a collection of text chunks (sentences or smaller fragments) in English and French from the 36th Parliament proceedings
of Canada. In our experiments, we used only a part of text chunks to generate term-documents matrices and obtained a $5383\times
818$ term-document matrix for English documents and a $8015\times 818$ term-document matrix for French documents.
\end{enumerate}
For Europarl data 100 pairs of documents were used for training data and the rest for testing data while for Hansard data 200
pairs of documents were used for training data and the rest for testing data. In both experiments, the linear kernel
\eqref{Linearkernel} was employed to compute kernel matrices. We measure the precision of document retrieval by using average
area under the ROC curve (AROC), and for a collection of queries we use the average of each query¡¯s retrieval precision as the
average retrieval precision of this collection. More details about the acquisition of term-document matrix, data preprocessing
and evaluation of retrieval performance can be found in \cite{ChuLNZ13}.

\figurename~\ref{DocRetrievalfig} presents the retrieval accuracy of KCCA, RKCCA and SKCCA on both datasets.

\subfiglabelskip=0pt
\begin{figure}[!htbp]
\centering %
\subfigure[][]{\includegraphics[width=0.45\textwidth]{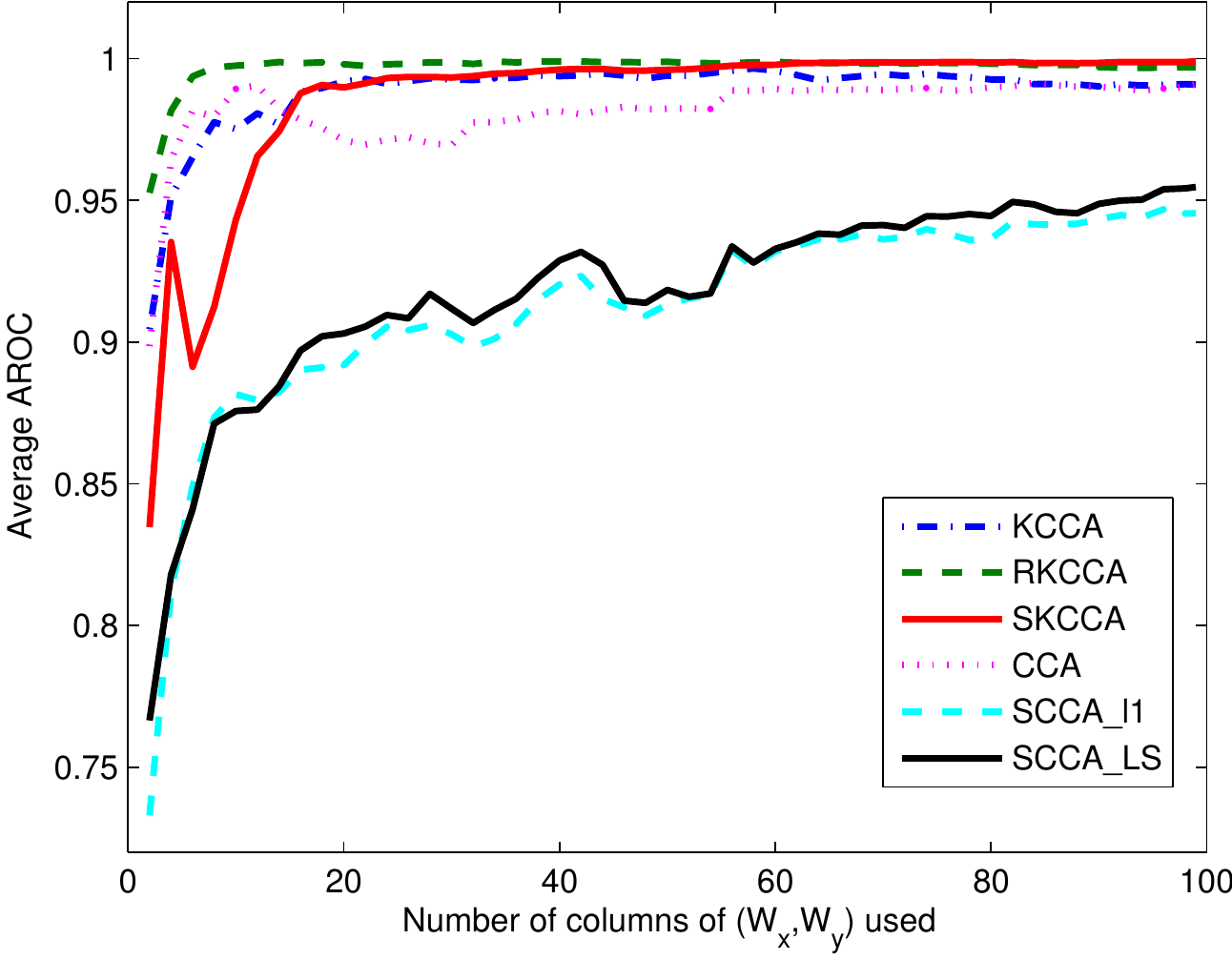} \label{Europarl}} \hfil
\subfigure[][]{\includegraphics[width=0.45\textwidth]{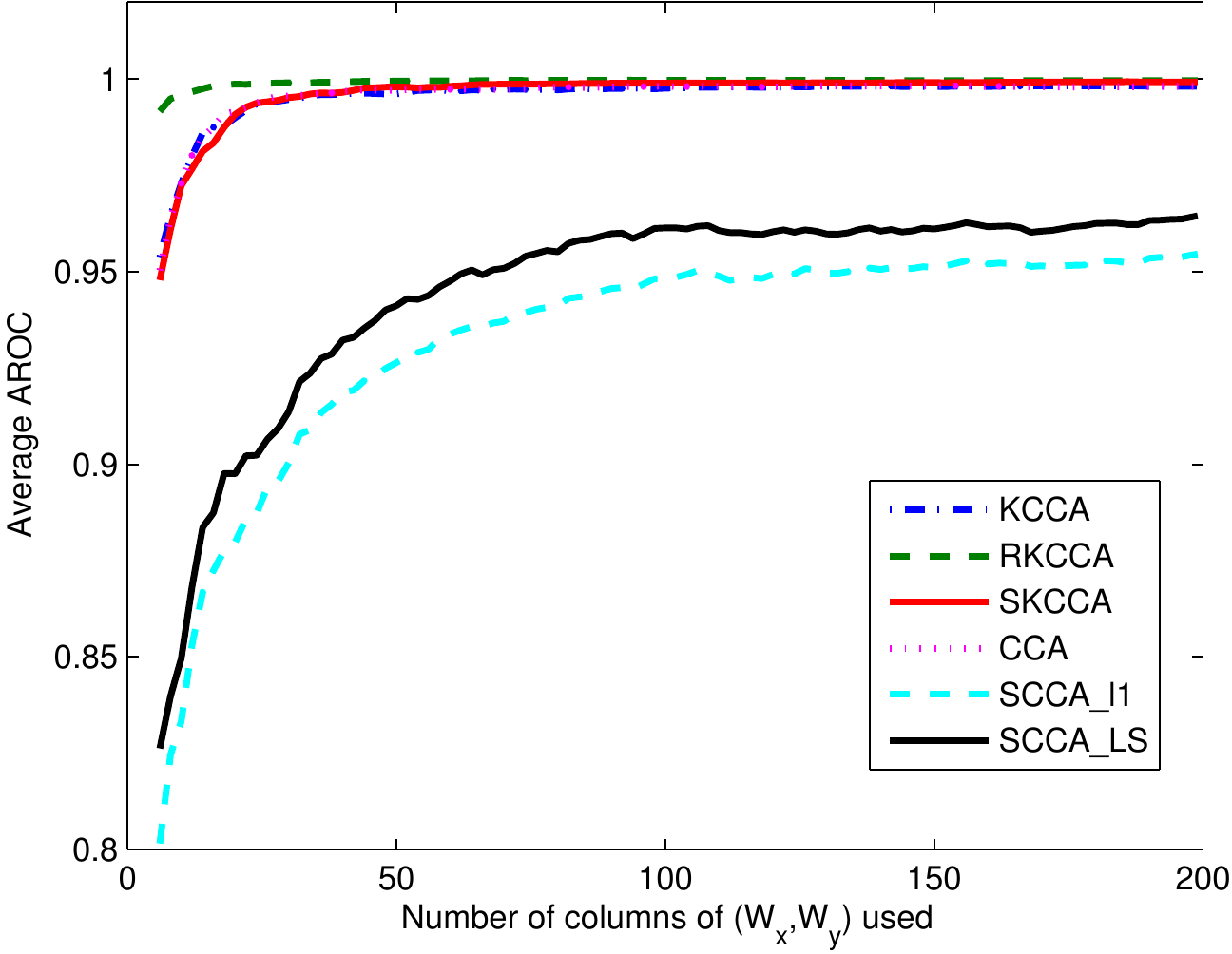} \label{Hansard200}} %
\caption{Cross-language document retrieval using KCCA, RKCCA and SKCCA:
\subref{Europarl} Europarl data with 100 training data, \subref{Hansard200} Hansard data with 200training data.}%
\label{DocRetrievalfig}%
\end{figure}
%
%

\figurename~\ref{DocRetrievalfig} shows that all three algorithms achieve high precision for cross-language document retrieval,
even though only a small number of training data were used. From the figures we also see that increasing $l$, the number of
columns of $W_{x}$ and $W_{x}$ used in retrieval task, will usually assist in improving the precision. One possible explanation
may be that when we increase $l$, more projections corresponding to nonzero canonical correlations are used for document
retrieval and these added projections may carry information contained in the training data. Both figures in
\figurename~\ref{DocRetrievalfig} show that RKCCA and SKCCA outperform KCCA in terms of retrieval accuracy, though their
difference is small. This indicates that both RKCCA and SKCCA have ability of avoiding data overfitting problem in ordinary
kernel CCA, as stated in Section \ref{KernelCCA}.


Additional results are presented in the following table, where we recorded the retrieval precision (AROC) using projections
corresponding to all nonzero canonical correlations, i.e, $l=\hat{m}$, summation of canonical correlations between testing data
($Corr$), sparsity of $\mathcal{W}_{x}$ and $\mathcal{W}_{y}$, and violation of the orthogonality constraints measured by
$Err(\mathcal{W}_{x}):=\frac{\|\mathcal{W}^{T}_{x}K^{2}_{x}\mathcal{W}_{x}-I_{l}\|_{F}}{\sqrt{l}}$ and
$Err(\mathcal{W}_{y}):=\frac{\|\mathcal{W}^{T}_{y}K^{2}_{y}\mathcal{W}_{y}-I_{l}\|_{F}}{\sqrt{l}}$.

\begin{table}[!t]
  \centering{\footnotesize
  \caption{Ordinary, regularized and sparse kernel CCA for cross-language document retrieval}\label{DocumentRetrieval}
  \begin{tabular}{c|c|c|c|c|c|c|c|c}
    \toprule
    Algorithms & AROC & $Corr$ & Sparsity & $Err(\mathcal{W}_{x})$ & $Err(\mathcal{W}_{y})$ & $l$ & $(\gamma_{x}, \gamma_{y})$ or $\rho$ & CPU time (s) \\
    \hline\hline
    \multicolumn{9}{c}{Europarl: 100 training data}\\
    \hline
    CCA & 0.9910 & 78.99 & (3.7, 1.7) & 1.234e-14 & 1.091e-14 & 99 & - & 4.30 \\
    \hline
    SCCA$\_\ell_{1}$ & 0.9455 & 61.34 & (99.5, 99.7) & 1.467e-5 & 1.477e-5 & 99 & - & 1.48e+4 \\
    \hline
    SCCA$\_$LS & 0.9547 & 63.14 & (99.7, 99.8) & 0.9965 & 0.9976 & 99 & $10^{-2}$ & 2.08e+3 \\
    \hline
    KCCA & 0.9910 & 78.64 & (0, 0) & 3.190e-15 & 3.291e-15 & 99 & - & 0.43 \\
    \hline
    RKCCA & 0.9970 & 77.29 & (0, 0) & 0.5456 & 0.5542 & 99 & $10^{0}$ & 0.41 \\
    \hline
    SKCCA & 0.9991 & 82.60 & (93.8, 93.8) & 0.9874 & 0.9881 & 99 & (0.7, 0.7) & 4.02 \\
    \hline\hline
    \multicolumn{9}{c}{Hansard: 200 training data}\\
    \hline
    CCA & 0.9981 & 160.32 & (12.6, 13.1) & 8.762e-15 & 1.006e-14 & 199 & - & 1.97 \\
    \hline
    SCCA$\_\ell_{1}$ & 0.9546 & 99.54 & (96.1, 97.4) & 1.583e-5 & 1.517e-5 & 199 & - & 1.11e+4 \\
    \hline
    SCCA$\_$LS & 0.9645 & 109.32 & (97.1, 98.1) & 0.4379 & 0.4567 & 199 & $10^{-2}$ & 1.31e+3 \\
    \hline
    KCCA & 0.9981 & 160.01 & (0, 0) & 6.274e-15 & 5.627e-15 & 199 & - & 0.42 \\
    \hline
    RKCCA & 0.9996 & 157.58 & (0, 0) & 0.9905 & 0.9904 & 199 & $10^{2}$ & 0.42 \\
    \hline
    SKCCA & 0.9994 & 166.69 & (89.0, 88.6) & 0.9616 & 0.9603 & 199 & (0.5, 0.5) & 4.21 \\
    \bottomrule
  \end{tabular}}
\end{table}

\begin{remark}

In Table \ref{DocumentRetrieval}, the 'Sparsity' column records sparsity of both $\mathcal{W}_{x}$ and $\mathcal{W}_{y}$. The
first component records sparsity of $\mathcal{W}_{x}$ while the second component records sparsity of $\mathcal{W}_{y}$. The
'$(\gamma_{x}, \gamma_{y})$ or $\rho$' column records value of regularization parameters in RKCCA and SKCCA.
\end{remark}

As can be seen from Table \ref{DocumentRetrieval}, RKCCA and SKCCA achieve high retrieval precision on both datasets, and these
two approaches have comparable performance in terms of precision which is also shown in \figurename~\ref{DocRetrievalfig}.
We also note that SKCCA can obtain larger summation of canonical correlations between testing data than other two approaches. In
both experiments sparsity of $\mathcal{W}_{x}$ and $\mathcal{W}_{y}$ computed by KCCA and RKCCA is 0, which means the dual
projections are dense; in contrast, sparsity of $\mathcal{W}_{x}$ and $\mathcal{W}_{y}$ computed by SKCCA is greater than $88\%$,
which means that more than $88\%$ entries of both $\mathcal{W}_{x}$ and $\mathcal{W}_{y}$ are zero.


In addition, from \figurename~\ref{DocRetrievalfig}, we notice that when $l=10$ AROC of RKCCA and SKCCA is already very high and
increasing $l$ will not improve AROC much. Although AROC will increase as we increase $l$, the increment is very small when
$l>10$. Thus, in order to reduce computing time in practice we do not need to compute dual projections corresponding to all
nonzero canonical correlations.

\subsection{Content-based image retrieval}

Content-based image retrieval (CBIR) is a challenging aspect of multimedia analysis and has become popular in past few years.
Generally, CBIR is the problem of searching for digital images in large databases by their visual content (e.g., color, texture,
shape) rather than the metadata such as keywords, labels, and descriptions associated with the images. There exists study
utilizing kernel CCA for image retrieval \cite{Hardoon04}. In this section, we apply our sparse kernel CCA approach to
content-based image retrieval task by combining image and text data.

We experimented on the following two image datasets:
\begin{enumerate}
  \item Ground Truth Image Database\footnote{\url{http://www.cs.washington.edu/research/imagedatabase/groundtruth/}}  created at the University of Washington, which
consists of 21 datasets of outdoor scene images. In our experiment we used 852 images form 19 datasets that have been annotated
with keywords.
  \item Photography image database used in SIMPLIcity\footnote{\url{http://sites.stat.psu.edu/~jiali/index.download.html}} retrieval
system. The database contains 2360 manually annotated images, from which we randomly selected 1000 images in our experiment.
\end{enumerate}
We exploited text features and low-level image features, including color and texture, and applied sparse kernel CCA to perform
image retrieval from text query.

\noindent\textbf{Text Features} Using the bag-of-words approach, same as what we have done in cross-language document retrieval
experiment, to represent the text associated with images. Since each image in the datasets has been annotated with keywords, we
consider terms adjacent to an image as a document. After removing stop-words and stemming, we get a  term-document matrix of size
$189\times 852$ for Ground Truth Image Data and a term-document matrix of size $141\times 1000$ for SIMPLIcity data.

We applied Gabor filters to extract texture features and used HSV (hue-saturation-value) color representation as color features.
To enhance sensitivity to the overall shape, we divided each image into $8\times 8=64$ patches from which texture and color
features were extracted.

\noindent\textbf{Texture Features} The Gabor filters in the spatial domain is given by
\begin{equation}\label{Gabor}
    g_{\lambda\theta\psi\sigma\gamma}(x,y)=\text{exp}\left(-\frac{x'^{2}+\gamma^{2}y'^{2}}{2\sigma^{2}}\right)
    \text{cos}(2\pi\frac{x'}{\lambda}+\psi),
\end{equation}
where $x'=x\text{cos}(\theta)+y\text{sin}(\theta)$, $y'=-x\text{sin}(\theta)+y\text{cos}(\theta)$, $x$ and $y$ specify the
position of a light impulse. In this equation, $\lambda$ represents the wavelength of the cosine factor, $\theta$ represents the
orientation of the normal to the parallel stripes of a Gabor function in degrees, $\psi$ is the phase offset of the cosine factor
in degrees, $\gamma$ is the spatial aspect ratio and $\sigma$ is the standard deviation of the Gaussian. In
\figurename~\ref{GarborImages} the Gabor filter impulse responses used in this experiment are shown. So from each of the 64 image
patches, the Gabor filter can extract 16 texture features, which eventually results in a total of $64\times 16=1024$ features for
each image.

\begin{figure}[!htbp]
  \centering
  \includegraphics[width=0.5\textwidth]{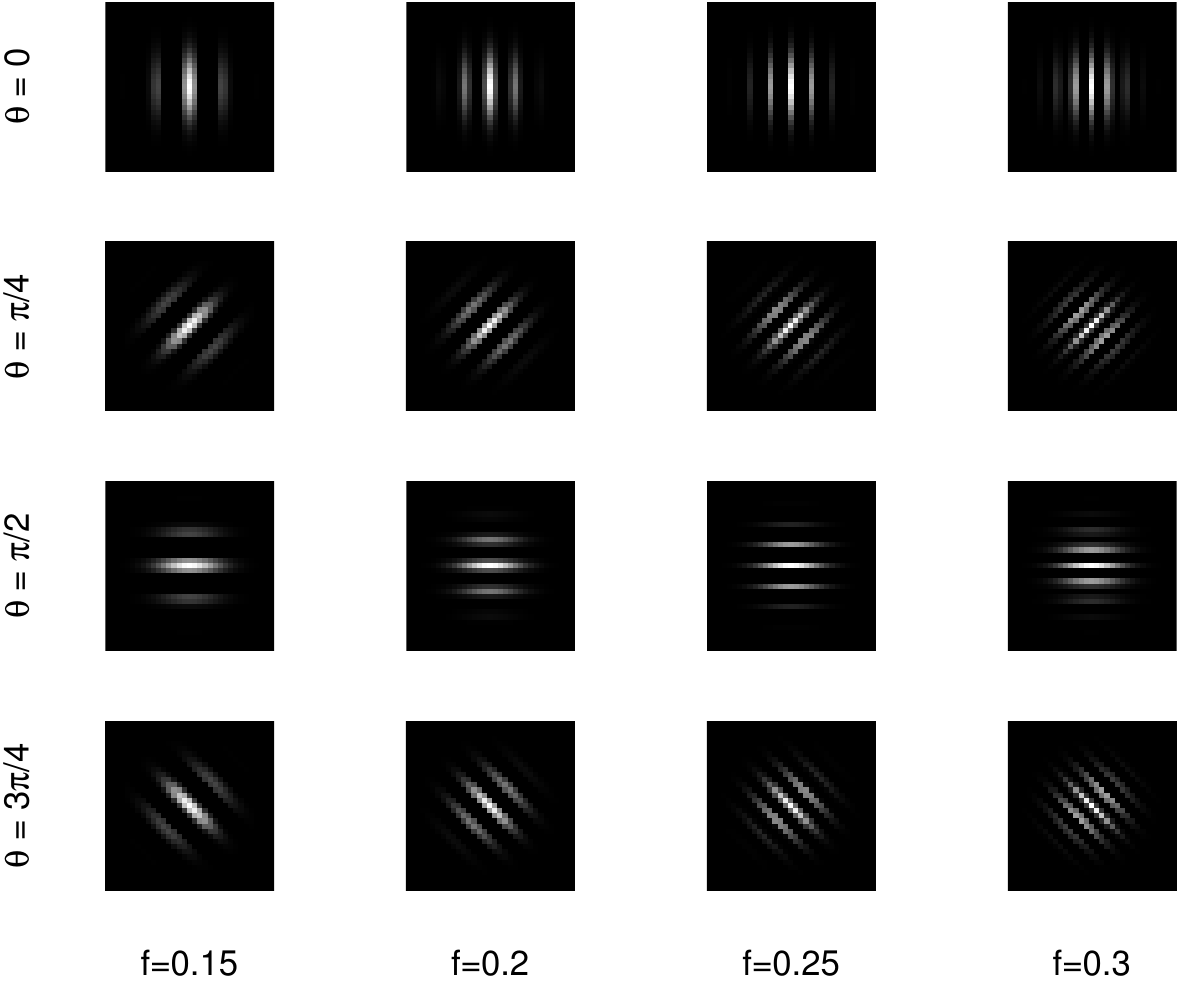}\\
  \caption{Gabor filters used to extract texture features. Four frequencies $f=1/\lambda=[0.15, 0.2, 0.25, 0.3]$
  and four directions $\theta=[0, \pi/4, \pi/2, 3\pi/4]$ are used. The width of the filters are $\sigma=4$.}\label{GarborImages}
\end{figure}

\noindent\textbf{Color Features} We used the HSV color representation as color features. Each color components was quantized into
16 bins, and each image patch was represented by 3 normalized color histograms. This gives 48 features for each of the 64
patches, which eventually results in $48\times 64=3072$ features for each image.

Following previous work \cite{Hardoon04,Hardoon09}, we used Gaussian kernel
$$
k_{x}(I_{i},I_{j})=\text{exp}\left(-\frac{\|I_{i}-I_{j}\|^{2}}{2\sigma^{2}}\right),
$$
where $I_{i}$ is a vector concatenating texture features and color features of $i$th image and $\sigma$ is selected as the
minimum distance between different images, to compute kernel matrix $K_{x}$ for the first view. The linear kernel
\eqref{Linearkernel} was employed to compute kernel matrix $K_{y}$ using text features for the other view. We used 217 images as
training data for the first dataset and 400 images for the second data, the rest were used as testing data.

In \tablename~\ref{CBIR}, we compare the performance of KCCA, RKCCA and SKCCA. Like the cross-language document retrieval
experiments, we use AROC to evaluate the performance of these three algorithms. We see from \tablename~\ref{CBIR} that both RKCCA
and SKCCA outperform KCCA, and RKCCA achieves the best performance in terms of AROC. In both experiments, dual projections
$\mathcal{W}_{x}$ and $\mathcal{W}_{y}$ computed by SKCCA  have high sparsity.

\begin{table}[H]
  \centering{\footnotesize
  \caption{Ordinary, regularized and sparse kernel CCA for Content-based image retrieval}\label{CBIR}
  \begin{tabular}{c|c|c|c|c|c|c|c|c}
    \toprule
    Algorithms & AROC & $Corr$ & Sparsity & $Err(\mathcal{W}_{x})$ & $Err(\mathcal{W}_{y})$ & $l$ & $(\gamma_{x}, \gamma_{y})$ or $\rho$ & CPU time (s) \\
    \hline\hline
    \multicolumn{9}{c}{UW ground truth data: 217 training data}\\
    \hline
    CCA & 0.7396 & 11.53 & (0, 7.7) & 6.832e-15 & 7.638e-15 & 124 & - & 0.50 \\
    \hline
    SCCA$\_\ell_{1}$ & 0.6637 & 7.89 & (94.2, 34.4) & 1.669e-5 & 2.818e-5 & 124 & - & 1.55e+3 \\
    \hline
    SCCA$\_$LS & 0.7140 & 11.99 & (96.5, 43.3) & 0.5420 & 0.3151 & 124 & $10^{-2}$ & 1.61e+2 \\
    \hline
    KCCA & 0.8259 & 19.37 & (0, 0) & 3.215e-015 & 7.069e-14 & 124 & - & 0.39 \\
    \hline
    RKCCA & 0.8912 & 20.76 & (0, 0) & 0.9990 & 0.9984 & 124 & $10^{3}$ & 0.43 \\
    \hline
    SKCCA & 0.8489 & 24.98 & (91.1, 88.4) & 0.9576 & 0.9175 & 124 & (0.5, 0.3) & 7.24 \\
    \hline\hline
    \multicolumn{9}{c}{SIMPLIcity: 400 training data}\\
    \hline
    CCA & 0.7390 & 10.95 & (0, 19.9) & 5.415e-15 & 6.434e-15 & 76 & - & 3.48 \\
    \hline
    SCCA$\_\ell_{1}$ & 0.6509 & 7.58 & (90.2, 42.8) & 1.066e-5 & 2.099e-5 & 76 & - & 4.63e+3 \\
    \hline
    SCCA$\_$LS & 0.7100 & 9.21 & (94.1, 46.7) & 0.5397 & 0.1683 & 76 & $10^{-2}$ & 3.66e+2 \\
    \hline
    KCCA & 0.8509 & 20.21 & (0, 0) & 3.299e-15 & 2.061e-14 & 76 & - & 1.99 \\
    \hline
    RKCCA & 0.8653 & 13.18 & (0, 0) & 0.9989 & 0.9598 & 76 & $10^{2}$ & 2.19 \\
    \hline
    SKCCA & 0.8523 & 18.89 & (58.3, 57.7) & 0.5717 & 0.2707 & 76 & (0.1, 0.01) & 1.75e+2 \\
    \bottomrule
  \end{tabular}}
\end{table}

In \figurename~\ref{CBIRImages}, we present AROC of KCCA, RKCCA and SKCCA using different number of projections ($l$) in both
experiments. As visible in \figurename~\ref{CBIRImages}, the AROC of all approaches gradually increases when more projections are
used for retrieval. In addition, we observe that RKCCA achieves the largest AROC for any $l$ in both experiments, which verifies
its ability of generalizing KCCA. On the other hand, the AROC of SKCCA is at first smaller than and then exceeds that of KCCA.
This indicates that the ability of generalization of SKCCA is weaker than RKCCA, which may be attributed to the high sparsity of
dual projections computed by SKCCA as shown in \tablename~\ref{CBIR}.

\subfiglabelskip=0pt
\begin{figure}[!htbp]
\centering %
\subfigure[][]{\includegraphics[width=0.45\textwidth]{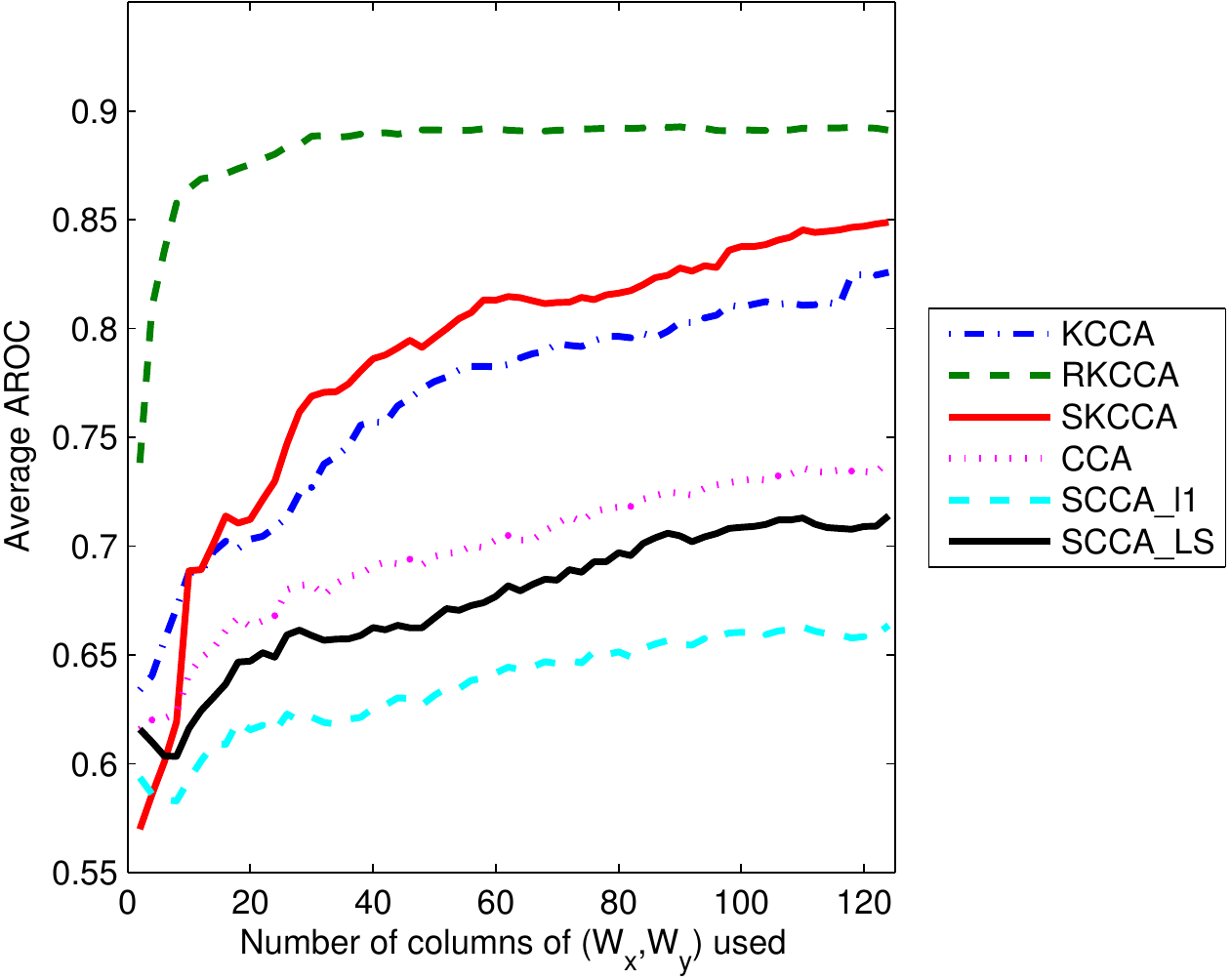} \label{GroundTruth200}} \hfil
\subfigure[][]{\includegraphics[width=0.45\textwidth]{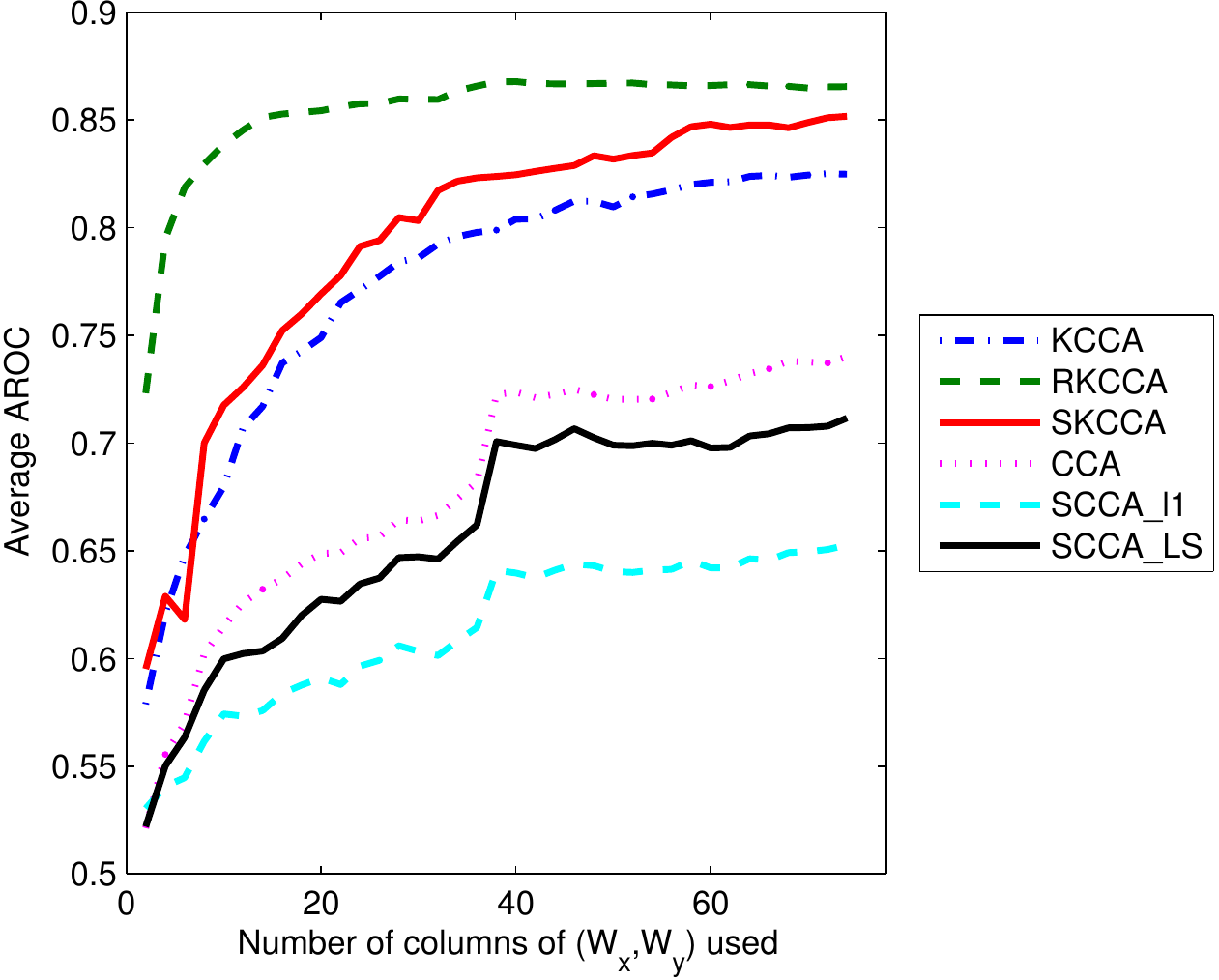} \label{SIMPLIcity1000}} %
\caption{Content-based image retrieval using KCCA, RKCCA and SKCCA:
\subref{GroundTruth200} UW ground truth data with 217 training data, \subref{SIMPLIcity1000} SIMPLIcity data with 400 training data.}%
\label{CBIRImages}%
\end{figure}

\section{Conclusions}\label{conclusions}

In this paper, we proposed a novel sparse kernel CCA algorithm called SKCCA. This algorithm is based on a relationship between kernel CCA and least squares problems which is an extension of a similar relationship between CCA and least squares problems. We incorporated sparsity into kernel CCA by penalizing the $\ell_{1}$-norm of dual vectors. The resulting $\ell_{1}$-regularized minimization problems were solved by a fixed-point continuation (FPC) algorithm. Empirical results show that SKCCA not only performs well in computing sparse dual transformations, but also alleviates the over-fitting problem of kernel CCA.

Several interesting questions and extensions of sparse kernel CCA remain. In many applications such as genomic data analysis, CCA is often performed on more than two datasets. It will be helpful to extend sparse kernel CCA to deal with multiple datasets. In the derivation of SKCCA, we did not discuss the choice of kernel function. However, it is believed that the performance of kernel CCA depends on the choice of the kernel. As for future research, we plan to study the problem of finding optimal kernel of kernel CCA for different applications. Moreover, we also plan to generalize the idea of sparse kernel CCA in this paper to involve multiple kernels.

\bibliographystyle{abbrv}
\bibliography{SparseKCCA}

\end{document}